\newtheorem{remark}{Remark}
\newtheorem{theorem}{Theorem}
\newtheorem{theorem_supp}{Theorem}
\newtheorem{lemma}{Lemma}
\newtheorem{lemma_supp}{Lemma}
\newtheorem{proposition_supp}{Proposition}
\newcommand{\ie}{\textit{i.e.}\xspace}
\newcommand{\eg}{\textit{e.g.}\xspace}
\newcommand{\etal}{\textit{et al.}\xspace}
\begin{document}
%
\title{Network Moments: Extensions and Sparse-Smooth Attacks}
%
%
%
%

\author{
  \IEEEauthorblockN{
    Modar Alfadly\IEEEauthorrefmark{1}\textsuperscript{\textsection},
    Adel Bibi\IEEEauthorrefmark{1}\textsuperscript{\textsection},
    Emilio Botero\IEEEauthorrefmark{1}\IEEEauthorrefmark{3},
    Salman Alsubaihi\IEEEauthorrefmark{1} and
    Bernard Ghanem\IEEEauthorrefmark{1}
  }
  \\
  \IEEEauthorblockA{\IEEEauthorrefmark{1} King Abdullah University of Science and Technology (KAUST), Thuwal, Saudi Arabia}
  \\
  \IEEEauthorblockA{\IEEEauthorrefmark{3} Universit\'e de Montr\'eal, Quebec, Canada}
  \\
  \{modar.alfadly,adel.bibi,salman.subaihi,bernard.ghanem\}@kaust.edu.sa, emilio.botero@umontreal.ca
}

\IEEEtitleabstractindextext{
\begin{abstract}
    The impressive performance of deep neural networks (DNNs) has immensely strengthened the line of research that aims at theoretically analyzing their effectiveness. This has incited research on the reaction of DNNs to noisy input, namely developing adversarial input attacks and strategies that lead to robust DNNs to these attacks. To that end, in this paper, we derive exact analytic expressions for the first and second moments (mean and variance) of a small piecewise linear (PL) network (Affine, ReLU, Affine) subject to Gaussian input. In particular, we generalize the second-moment expression of \cite{bibi2018analytic} to arbitrary input Gaussian distributions, dropping the zero-mean assumption. We show that the new variance expression can be efficiently approximated leading to much tighter variance estimates as compared to the preliminary results of Bibi \etal \cite{bibi2018analytic}. Moreover, we experimentally show that these expressions are tight under simple linearizations of deeper PL-DNNs, where we investigate the effect of the linearization sensitivity on the accuracy of the moment estimates. Lastly, we show that the derived expressions can be used to construct sparse and smooth Gaussian adversarial attacks (targeted and non-targeted) that tend to lead to perceptually feasible input attacks.
\end{abstract}

\begin{IEEEkeywords}
network moments, price's theorem, network linearization, probabilistic analysis, gaussian noise, adversarial attacks.
\end{IEEEkeywords}}

\maketitle
\begingroup\renewcommand\thefootnote{\textsection}
\begin{NoHyper}
\footnotetext{Equal contribution}
\end{NoHyper}
\IEEEdisplaynontitleabstractindextext

%
\IEEEpeerreviewmaketitle

\IEEEraisesectionheading{\section{Introduction}\label{sec:introduction}}
\IEEEPARstart{D}{eep} neural networks (DNNs) have revolutionized not only the computer vision and machine learning communities but several other fields throughout science and engineering such as natural language processing, bioinformatics and medicine \cite{lecun2015yoshua}. While major advances in the areas of object classification \cite{krizhevsky2012imagenet}, and speech recognition \cite{hinton2012deep} to name a few, have been attributed to DNNs, a rigorous theoretical understanding of their effectiveness remains elusive. For instance, while DNNs have shown impressive performance on visual recognition tasks, they still exhibit uncouth behaviour when they are subject to carefully tailored inputs \cite{szegedy2013intriguing}. Many prior works show that it is rather easy, through simple routines, to craft imperceptible input perturbations, referred to as adversarial attacks. Such attacks can result in a drastic negative effect on the classification performance of many popular deep models \cite{goodfellow2014explaining,moosavi2016deepfool,szegedy2013intriguing}. Even more surprisingly, one can design such adversarial perturbations to be agnostic to both the input image and the network architecture \cite{moosavi2016universal}, which are referred to as universal perturbations. Unfortunately, less progress has been made towards systematically addressing and understanding this challenge. One of the early and naive approaches towards addressing this nuisance is simply through augmenting the training dataset with data corrupted with adversaries. While this has been shown to improve network robustness against such adversaries \cite{goodfellow2014explaining,moosavi2016deepfool}, unfortunately, this is a vacuous brute force approach that does not provide insights on the reasons behind such behaviour. Moreover, it does not scale for large dimensional inputs, as the amount of corresponding augmentation has to necessarily be prohibitively large to capture the variation in input space. This effectively deems the augmentation approach infeasible in large dimensions.

\begin{figure}[!t]
\centering
    \includegraphics[width = 0.48\textwidth]{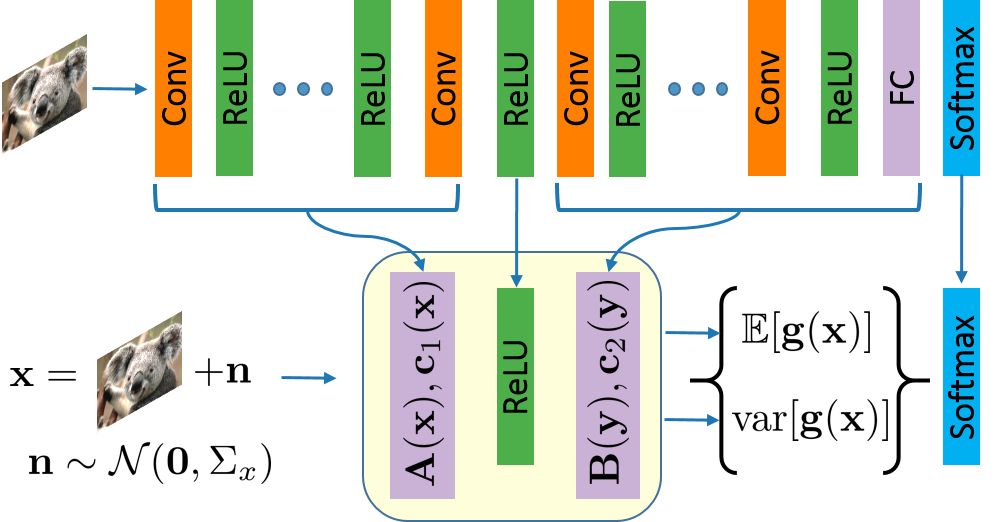}
	\caption{\textbf{Two-stage linearization of an arbitrarily deep network.} Any PL-DNN can be linearized before and after a given ReLU through a two-stage linearization truncating it into a (Affine, ReLU, Affine) network, whose  $1^{\text{st}}$ and $2^{\text{nd}}$ moments can be derived  analytically when it is exposed to Gaussian input noise. We show that these moments are helpful in predicting how PL-DNNs react to noise and in constructing adversarial Gaussian input attacks.}
    \label{pull_fig}
\end{figure}

In this paper, we derive expressions for the first and second moments (the mean and consequently the variance), referred to as Network Moments, of a small piecewise linear (PL) network in the form of (Affine, ReLU, Affine) subject to a general Gaussian input. The preliminary version of these Network Moments were derived and analyzed in \cite{bibi2018analytic}. Beyond these preliminary results, we derive in this paper a new variance expression, which does not claim any assumptions on the mean or the covariance of the input Gaussian. This generalizes the previous result in \cite{bibi2018analytic}, which only holds under a zero mean input assumption. These expressions provide a powerful tool for analyzing deeper PL-DNNs by means of two-stage linearization (as shown in Figure \ref{pull_fig}) with a plethora of applications. For instance, it has been shown that such expressions can be quite useful in training robust networks very efficiently \cite{alfadly2019train}, avoiding any need for noisy data augmentation. In particular, empirical evidence in \cite{alfadly2019train} indicates that simple regularizers based on the mean and variance expressions can boost network robustness by two orders of magnitude not only against Gaussian attacks but also against other popular adversarial attacks (\eg PGD, LBFGS \cite{szegedy2013intriguing}, FGSM \cite{goodfellow2014explaining} and DF2 \cite{moosavi2016deepfool}). In this paper, we show that network moments can be used to systematically design Gaussian distributions that can serve as input adversaries. In particular, we conduct several experiments on MNIST \cite{lecun1998mnist} and Facial Emotion Recognition datasets \cite{goodfellow2015} to demonstrate that these expressions can be used to craft sparse and smooth Gaussian attacks that are structured and perceptually feasible, \ie they exhibit interesting semantic information aligned with human perception.

\vspace{3pt} \noindent\textbf{Contributions.} \textbf{(i)} We provide a fresh perspective on analyzing PL-DNNs by deriving closed form expressions for the output mean and variance of a network in the form (Affine, ReLU, Affine) in the presence of general Gaussian input noise. In particular, we generalize the results of \cite{bibi2018analytic} and derive a closed form expression for the second moment under no assumptions on the mean nor covariance of the input Gaussian. Through network linearization, extensive experiments show that the new expression for the output variance can be efficiently approximated leading to much tighter second-moment estimates than that of \cite{bibi2018analytic}. \textbf{(ii)} We formalize a new objective as a function of the derived output mean and variance to construct sparse and smooth Gaussian adversarial attacks. We conduct extensive experiments on both MNIST and Facial Emotion datasets demonstrating that the constructed adversaries are perceptually feasible.

\section{Related Work}

Despite the impressive performance of deep neural networks on visual recognition tasks, their performance can still be drastically obstructed in the presence of small imperceptible adversarial noise \cite{goodfellow2014explaining,moosavi2016deepfool,szegedy2013intriguing}. Alarmingly, such adversaries are abundant and easy to construct, where in some scenarios constructing an adversary is as simple as performing a single gradient ascent step of some loss function with respect to the input \cite{szegedy2013intriguing}. More surprisingly, there exist deterministic input samples that are agnostic of both the input and network architecture that can cause severe reduction in the network performance \cite{moosavi2016universal}. Moreover, in some extreme cases, it can be sufficient to perturb a single input pixel that can result in a misclassification rate as high as $70\%$ on popular benchmarks \cite{su2017one}.

This nuisance is serious and menacing and has to be addressed, particularly since DNNs are now deployed in sensitive real-world applications (\eg self driving cars). Thereafter, there have been several directions towards understanding and circumventing this. Early works aimed at analyzing the behaviour of DNNs in the general presence of input noise. For instance, Fawzi \etal \cite{fawzi2016measuring} proposed a generic probabilistic framework for analyzing the robustness of a classifier under different nuisance factors. Another seminal work particularly assessed  the robustness of a classifier undergoing geometric transformations \cite{fawzi2015manitest}. On the other hand, there has been several other works on the design and training of networks that are robust against adversarial attacks. One of the earliest approaches on this was the direct augmentation of adversarial samples to the training data, which has been shown to indeed lead to more robust networks \cite{goodfellow2014explaining,moosavi2016deepfool}. Later, the work of \cite{madry2017towards} adopted a similar strategy but by incorporating the adversarial augmentation during the iterative training process. In particular, it was shown that one can achieve significant boosts in network robustness against first-order adversarial attacks, \ie attacks that depend only on gradient information, by minimizing the worst adversarial loss over all bounded energy (often measured in $\ell_\infty$ norm) perturbations around a given input.

Since then, there has been a surge in literature studying verification approaches for DNNs. In this line of work, the aim is to design networks that are accurate and provably robust against all bounded input attacks. In general, verification approaches can be coarsely categorized as exact or relaxed verifiers. The former try to find the exact largest adversarial loss over all possible bounded inputs. Such verifiers often require piecewise linear networks and rely on either Mixed Integer Solvers (MIS) \cite{cheng2017maximum,lomuscio2017approach} or on Satisfiability Modulo Theories (SMT) solvers \cite{scheibler2015towards,katz2017reluplex}. These verifiers are too expensive for DNNs due to their NP-complete nature. Relaxed verifiers on the other hand scale better, since they only find an upper bound to the worst adversarial loss \cite{zhang2018efficient,wong2017provable}. There has been several new directions that aim at addressing the verification problem by constructing networks with smoothed decision boundaries \cite{lecuyer2019certified,cohen_randomized_1}. 

In this paper, we are not concerned with such techniques but only focus on analyzing the behaviour of networks in the presence of input noise. We focus our analysis on PL-DNNs with ReLU activations. Unlike previous work, we study how the probabilistic moments of the output of a PL-DNN with a Gaussian input can be computed analytically. A similar work to ours is \cite{gast2018lightweight}, where the probabilistic output mean and variance of a deep network are estimated by propagating the estimates of the moments \emph{per layer} under the assumption that the joint distribution after each affine layer is still Gaussian (through the central limit theorem). On the contrary, we derive the \textit{exact} first and second moments of a simple two-layer (Affine, ReLU, Affine) network. We extrapolate these expressions to deeper PL-DNNs by employing a simple two-stage linearization step that locally approximates them with a (Affine, ReLU, Affine) network. Since these expressions are a function of the noise parameters, they are particularly useful in analyzing and inferring the behaviour of the original PL-DNN \emph{without} having to probe the network with inputs sampled from the noise distribution as regularly done in previous work \cite{goodfellow2014explaining,moosavi2016deepfool}.

\section{Network Moments\texorpdfstring{\protect\footnote{All proofs are omitted for the \textbf{Appendix}.}}{}}

We start by analyzing a particularly shaped network in the form of (Affine, ReLU, Affine) in the presence of Gaussian input noise. The functional form of the network of interest is given as $\mathbf{g}(\mathbf{x}) = \mathbf{B} \max\left(\mathbf{A} \mathbf{x} + \mathbf{c}_1,\mathbf{0}_p \right) + \mathbf{c}_2$, where $\max(.)$ is an  element wise operator. The affine mappings can be of any size, and we assume throughout the paper that $\mathbf{A} \in \mathbb{R}^{p \times n}$ and $\mathbf{B} \in \mathbb{R}^{d \times p}$, where $d$ is the number of output logits. Note that $\mathbf{A}$ and $\mathbf{B}$ can be of any structure (circular or Toeplitz) generalizing both fully connected and convolutional layers.

In this section, we analyze $\mathbf{g}$ when $\mathbf{x}$ is a Gaussian random vector, \ie $\mathbf{x} \sim \mathcal{N}(\mu_x,\Sigma_x)$. Seeking the probability density function (PDF) through the nonlinear random variable mapping $\mathbf{g}$ is possible for when $\mathbf{B} = \mathbf{I}$ but much more difficult for arbitrary $\mathbf{B}$ in general. Thus, we instead focus on deriving the \textit{probabilistic moments} of the unknown distribution of $\mathbf{g}(\mathbf{x})$. For ease of notation, we denote $\mathbf{g}_{i}(.)$ as the $i^{\text{th}}$ function in $\mathbf{g}(.)$, \ie $\mathbf{g}_i(\mathbf{x}) =$ $\mathbf{B}(i,:) \text{max}(\mathbf{A}\mathbf{x} + \mathbf{c}_1,\mathbf{0}_p) + \mathbf{c}_2(i)$. At first, and for completeness, we present the results of our preliminary work \cite{bibi2018analytic}, where the first moment (mean) expression is derived for a general Gaussian input distribution, while the second moment is derived under a zero input mean assumption, \ie $\mu_x = \mathbf{0}_n$ with $\mathbf{c}_1 = \mathbf{0}$. We then derive and generalize the expression for the second moment of $\mathbf{g}(\mathbf{x})$ for a generic Gaussian distribution under no assumptions in Lemma \ref{lemma4}.

\subsection{Deriving the \texorpdfstring{$1^{\text{st}}$}{first} Output Moment: \texorpdfstring{$\mathbb{E}[\mathbf{g}(\mathbf{x})]$}{E[g(x)]}}

\noindent To derive the first moment of $\mathbf{g}(\mathbf{x})$, we first consider the scalar function $q(x) = \max(x,0)$ acting on a single Gaussian random variable $x$.

\begin{remark}
\label{remark1}
The PDF of $q(x) = \max(x,0) : \mathbb{R} \rightarrow \mathbb{R}$ where $x \sim \mathcal{N}\left(\mu_x, \sigma_x^2 \right)$ is:
\begin{align}
f_q(x) = Q\left(\frac{\mu_x}{\sigma_x}\right) \delta(x) + f_x(x) u(x)
\notag
\end{align}
where $Q(.)$ is the Gaussian Q-function, $\delta(x)$ is the dirac function, $f_x(x)$ is the Gaussian PDF, and  $u(.)$ is the unit step function. It follows directly that $\mathbb{E}[q(x)] = \frac{\sigma_x}{\sqrt{2\pi}}$ when $\mu_x=0$.
\end{remark}

\noindent Now, we present the first moment of $\mathbf{g}(\mathbf{x})$.

\begin{theorem}
\label{theo1}
For any function in the form of $\mathbf{g}(\mathbf{x})$ where $\mathbf{x} \sim \mathcal{N} \left(\mu_x, \Sigma_x \right)$, we have:
\begin{equation}
\begin{aligned}
\mathbb{E}[\mathbf{g}_i(\mathbf{x})] =  \sum_{v=1}^p &\mathbf{B}(i,v)\left(  \frac{1}{2}  \bar{\mu}_{v}  -  \frac{1}{2}  \bar{\mu}_{v}  \text{erf}\left(\frac{-\bar{\mu}_{v}}{\sqrt{2} \bar{\sigma}_{v}}\right) \right. \\
& \left. +  \frac{1}{\sqrt{2 \pi}}\bar{\sigma}_{v} \exp\left(-\frac{\bar{\mu}_{v}^2}{2 \bar{\sigma}_{v}^2} \right) \right) + \mathbf{c}_2(i) 
\notag
\end{aligned}
\end{equation}
where $\bar{\mu}_{v} = \left(\mathbf{A} \mu_x + \mathbf{c}_1\right)(v)$,  $\bar{\Sigma} = \mathbf{A} \Sigma_x \mathbf{A}^\top$, $\bar{\sigma}_{v}^2 = \bar{\Sigma}(v,v)$ and $\text{erf}\left(x\right) = \frac{2}{\sqrt{\pi}} \int_0^x \text{e}^{-t^2}dt$ is the error function.
\end{theorem}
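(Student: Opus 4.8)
The plan is to reduce the vector-valued expectation to a sum of scalar expectations by linearity, and then to evaluate a single one-dimensional Gaussian integral of the rectifier. Since $\mathbf{g}_i(\mathbf{x}) = \mathbf{B}(i,:)\max(\mathbf{A}\mathbf{x}+\mathbf{c}_1,\mathbf{0}_p) + \mathbf{c}_2(i) = \sum_{v=1}^p \mathbf{B}(i,v)\,\max\big((\mathbf{A}\mathbf{x}+\mathbf{c}_1)(v),0\big) + \mathbf{c}_2(i)$, linearity of expectation gives $\mathbb{E}[\mathbf{g}_i(\mathbf{x})] = \sum_{v=1}^p \mathbf{B}(i,v)\,\mathbb{E}\big[\max(y_v,0)\big] + \mathbf{c}_2(i)$, where $y_v := (\mathbf{A}\mathbf{x}+\mathbf{c}_1)(v)$. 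Because affine images of Gaussians are Gaussian, $y_v \sim \mathcal{N}(\bar\mu_v,\bar\sigma_v^2)$ with $\bar\mu_v = (\mathbf{A}\mu_x+\mathbf{c}_1)(v)$ and $\bar\sigma_v^2 = (\mathbf{A}\Sigma_x\mathbf{A}^\top)(v,v)$, which is exactly the claimed notation. Note that although $\max(\cdot,0)$ is applied coordinatewise to a correlated vector, the expectation of each coordinate only depends on that coordinate's marginal, so the joint covariance is irrelevant here.

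Next I would compute $\mathbb{E}[\max(y,0)]$ for a single $y\sim\mathcal{N}(\bar\mu,\bar\sigma^2)$. By Remark \ref{remark1}, $\max(y,0)$ has density $Q(\bar\mu/\bar\sigma)\,\delta(\cdot) + f_y(\cdot)\,u(\cdot)$, so $\mathbb{E}[\max(y,0)] = \int_0^\infty t\, f_y(t)\, dt$ (the atom at $0$ contributes nothing). I would evaluate this integral either by the standard substitution $t = \bar\mu + \bar\sigma z$ splitting into a Gaussian-tail term and a term $\int z\,\varphi(z)\,dz$ whose antiderivative is $-\varphi(z)$, or directly by writing $\int_0^\infty t\,f_y(t)\,dt = \bar\mu\,\mathbb{P}(y>0) + \int_0^\infty (t-\bar\mu) f_y(t)\,dt$ and recognizing the second integral as $\bar\sigma^2 f_y(0)$. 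Either route yields $\mathbb{E}[\max(y,0)] = \bar\mu\,\Phi(\bar\mu/\bar\sigma) + \bar\sigma\,\varphi(\bar\mu/\bar\sigma)$, and rewriting $\Phi(\bar\mu/\bar\sigma) = \tfrac12\big(1 + \mathrm{erf}(\bar\mu/(\sqrt2\,\bar\sigma))\big)$ and $\varphi(\bar\mu/\bar\sigma) = \tfrac{1}{\sqrt{2\pi}}\exp(-\bar\mu^2/(2\bar\sigma^2))$ gives precisely the per-coordinate term $\tfrac12\bar\mu_v - \tfrac12\bar\mu_v\,\mathrm{erf}(-\bar\mu_v/(\sqrt2\,\bar\sigma_v)) + \tfrac{1}{\sqrt{2\pi}}\bar\sigma_v\exp(-\bar\mu_v^2/(2\bar\sigma_v^2))$, using the oddness of $\mathrm{erf}$. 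Substituting back into the sum completes the proof.

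There is essentially no hard obstacle here — the result is a routine one-dimensional integral once linearity strips away the vector structure. The only points requiring mild care are: (i) confirming that the Dirac atom in the density of $\max(y,0)$ does not contribute to the first moment (it does not, since $\int t\,\delta(t)\,dt = 0$); (ii) handling the degenerate case $\bar\sigma_v = 0$ separately, where $y_v$ is deterministic and $\mathbb{E}[\max(y_v,0)] = \max(\bar\mu_v,0)$, which is the continuous limit of the formula; and (iii) keeping the $\mathrm{erf}$ sign conventions straight when converting from the Gaussian CDF $\Phi$. I would present the scalar computation as the main lemma-step and then invoke it coordinatewise.
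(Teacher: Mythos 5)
Your proposal is correct and follows essentially the same route as the paper: linearity of expectation reduces the problem to the per-coordinate marginal $(\mathbf{A}\mathbf{x}+\mathbf{c}_1)(v)\sim\mathcal{N}(\bar\mu_v,\bar\sigma_v^2)$, and the one-dimensional integral $\int_0^\infty z\,f_{y_v}(z)\,dz$ is evaluated exactly as in the paper's appendix proof (which invokes Remark \ref{remark1} and computes the same Gaussian tail integral). Your added remarks on the Dirac atom and the degenerate $\bar\sigma_v=0$ case are fine but not needed beyond what the paper does.
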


\subsection{Deriving the \texorpdfstring{$2^{\text{nd}}$}{second} Output Moment: \texorpdfstring{$\mathbb{E}[\mathbf{g}^2(\mathbf{x})]$}{E[g(x)g(x)]}}

\noindent Here, we need three pre-requisite lemmas: one that characterizes the PDF of a squared ReLU (Lemma \ref{lemma1}), another that extends Price's Theorem \cite{price1958useful} (Lemma \ref{price_exten}), and one that derives the first moment of the product of two ReLU functions (Lemma \ref{lemma3}). 

\begin{lemma}
\label{lemma1}
The PDF of $q^2(x) = \max^2(x,0) : \mathbb{R} \rightarrow \mathbb{R}$ where $x \sim \mathcal{N}\left(0,\sigma_x^2 \right)$ is :
\begin{equation}
\begin{aligned}
f_{q^2}(x) =  \frac{1}{2} \delta(x) + \frac{1}{2\sqrt{x}} f_x(\sqrt{x}) u(\sqrt{x})
\notag
\end{aligned}
\end{equation}
and its first moment is $\mathbb{E}[q^2(x)] = \frac{\sigma_x^2}{2}$.
\end{lemma}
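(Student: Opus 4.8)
The plan is to obtain the density of $q^2(x)$ by pushing the density of $q(x)$ from Remark~\ref{remark1} through the map $y = q^2$, and then to read off the first moment by a single substitution. With $\mu_x = 0$ we have $Q(\mu_x/\sigma_x) = Q(0) = \tfrac12$, so Remark~\ref{remark1} gives $f_q(x) = \tfrac12\,\delta(x) + f_x(x)\,u(x)$; that is, $q = q(x)$ is supported on $[0,\infty)$ with an atom of mass $\tfrac12$ at the origin and a continuous part equal to the positive half of the Gaussian density $f_x$.

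First I would treat the atom separately. The event $\{q(x) = 0\}$ is exactly $\{x \le 0\}$, which has probability $\tfrac12$, and the map $y = q^2$ sends it to $\{y = 0\}$; hence $q^2(x)$ inherits an atom of mass $\tfrac12$ at $0$, contributing the term $\tfrac12\,\delta(x)$. On the open half-line $q > 0$ the map $q \mapsto q^2$ is a smooth increasing bijection onto $(0,\infty)$ with inverse $q = \sqrt{y}$ and Jacobian $\mathrm{d}q/\mathrm{d}y = 1/(2\sqrt{y})$, so the standard change-of-variables formula applied to the continuous part $f_x(q)\,u(q)$ produces $f_x(\sqrt{x})\,u(\sqrt{x})\,/(2\sqrt{x})$. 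Summing the two contributions yields the claimed expression for $f_{q^2}$. Equivalently one can argue through the CDF: for $x>0$, $\Pr[q^2(X)\le x] = \Pr[q(X)\le\sqrt{x}] = \Pr[X\le\sqrt{x}] = F_X(\sqrt{x})$, and differentiating in $x$ reproduces the continuous branch, while $\Pr[q^2(X)=0] = \Pr[X\le 0] = \tfrac12$ supplies the atom.

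For the first moment I would substitute $t = \sqrt{x}$ (so $x = t^2$, $\mathrm{d}x = 2t\,\mathrm{d}t$) in $\mathbb{E}[q^2(x)] = \int_0^\infty x\,\tfrac{1}{2\sqrt{x}}\,f_x(\sqrt{x})\,\mathrm{d}x$, which collapses to $\int_0^\infty t^2 f_x(t)\,\mathrm{d}t = \tfrac12\int_{-\infty}^{\infty} t^2 f_x(t)\,\mathrm{d}t = \tfrac{\sigma_x^2}{2}$ by symmetry of the zero-mean Gaussian; the atom at $0$ contributes nothing to the integral. (The same value also follows immediately from $\mathbb{E}[\max^2(X,0)] = \mathbb{E}[X^2\,\mathbf{1}_{\{X>0\}}] = \tfrac12\,\mathbb{E}[X^2]$.) There is no deep obstacle; the only point demanding care is the bookkeeping of the Dirac atom under the non-injective map $x \mapsto \max^2(x,0)$, which is precisely why I split off the event $\{x\le 0\}$ at the outset rather than attempting a single change of variables over all of $\mathbb{R}$.
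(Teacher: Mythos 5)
Your proposal is correct and follows essentially the same route as the paper, which derives $f_{q^2}$ from the CDF $\mathbb{P}(\max^2(x,0)\le c)$ — the atom of mass $\tfrac12$ from $\{x\le 0\}$ plus the differentiated continuous branch $F_x(\sqrt{c})$ — and your change-of-variables argument on the continuous part is just the pointwise version of that differentiation. The moment computation via $t=\sqrt{x}$ (equivalently $\mathbb{E}[X^2\,\mathbf{1}_{\{X>0\}}]=\tfrac12\sigma_x^2$) matches the stated result.
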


\begin{lemma}
\label{price_exten}
Let $\mathbf{x} \in \mathbb{R}^{n} \sim \mathcal{N}(\mu_x,\Sigma_x)$ for any even p, where $\sigma_{ij} = ~~ \Sigma_x(i,j) ~\forall i\neq j$. Under mild assumptions on the nonlinear map $\Psi :
\mathbb{R}^{n} \rightarrow \mathbb{R}$, we have $\frac{\partial^{\frac{p}{2}} \mathbb{E}[\Psi(\mathbf{x})]}{\prod_{\forall \text{odd} i}\partial \sigma_{ii+1}}$ $= \mathbb{E} [\frac{\partial^p \Psi(\mathbf{x})}{\partial x_1 \dots \partial
x_p} ]$.
\end{lemma}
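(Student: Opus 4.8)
The plan is to recognize this as the multivariate form of the argument underlying the classical Price's theorem: everything reduces to the ``heat-type'' identity satisfied by the Gaussian density together with integration by parts, applied $p/2$ times. First I would record the key fact that, writing $f_{\mathbf{x}}$ for the density of $\mathcal{N}(\mu_x,\Sigma_x)$, for every pair $i\neq j$ one has
\[
\frac{\partial f_{\mathbf{x}}(\mathbf{x})}{\partial \sigma_{ij}} \;=\; \frac{\partial^2 f_{\mathbf{x}}(\mathbf{x})}{\partial x_i \partial x_j}.
\]
This is cleanest in the Fourier domain: the characteristic function is $\phi(\omega)=\exp\!\big(\mathrm{i}\,\mu_x^\top\omega-\tfrac12\omega^\top\Sigma_x\omega\big)$ (with $\mathrm{i}$ the imaginary unit), so, treating $\sigma_{ij}=\sigma_{ji}$ as a single parameter, $\partial\phi/\partial\sigma_{ij}=-\omega_i\omega_j\,\phi$, whereas the transform of $\partial^2 f_{\mathbf{x}}/\partial x_i\partial x_j$ is $(\mathrm{i}\omega_i)(\mathrm{i}\omega_j)\phi=-\omega_i\omega_j\,\phi$; equating and inverting gives the displayed identity.

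Next I would differentiate $\mathbb{E}[\Psi(\mathbf{x})]=\int_{\mathbb{R}^n}\Psi(\mathbf{x})f_{\mathbf{x}}(\mathbf{x})\,d\mathbf{x}$ with respect to $\sigma_{12}$, bring the derivative inside the integral, substitute the identity above, and integrate by parts twice — once in $x_1$ and once in $x_2$ — to transfer both derivatives from $f_{\mathbf{x}}$ onto $\Psi$:
\[
\frac{\partial\,\mathbb{E}[\Psi(\mathbf{x})]}{\partial\sigma_{12}}
=\int \Psi(\mathbf{x})\,\frac{\partial^2 f_{\mathbf{x}}(\mathbf{x})}{\partial x_1\partial x_2}\,d\mathbf{x}
=\int \frac{\partial^2\Psi(\mathbf{x})}{\partial x_1\partial x_2}\,f_{\mathbf{x}}(\mathbf{x})\,d\mathbf{x}
=\mathbb{E}\!\left[\frac{\partial^2\Psi(\mathbf{x})}{\partial x_1\partial x_2}\right].
\]
Since $p$ is even, the index set $\{1,\dots,p\}$ decomposes into the disjoint pairs $(1,2),(3,4),\dots,(p-1,p)$, which are exactly the pairs $(i,i+1)$ with $i$ odd, so there is no overlap between successive steps. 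The map $\mathbf{x}\mapsto\partial^2\Psi/\partial x_1\partial x_2$ again satisfies the hypotheses, so I would iterate the last display $p/2$ times, each step consuming one parameter $\sigma_{i,i+1}$ and producing two further coordinate derivatives of $\Psi$; Clairaut's theorem lets me freely commute the mixed $\sigma$- and $x$-derivatives at each stage, so after $p/2$ iterations I obtain
\[
\frac{\partial^{\frac{p}{2}}\,\mathbb{E}[\Psi(\mathbf{x})]}{\prod_{\forall\,\text{odd }i}\partial\sigma_{i,i+1}}
=\mathbb{E}\!\left[\frac{\partial^p\Psi(\mathbf{x})}{\partial x_1\cdots\partial x_p}\right],
\]
which is the claim.

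The hard part will be the analytic justifications I have suppressed above, which is exactly what the ``mild assumptions on $\Psi$'' in the statement are meant to cover: (i) legitimacy of differentiating under the integral sign, i.e. producing an integrable function dominating $\partial(\Psi f_{\mathbf{x}})/\partial\sigma_{ij}$ uniformly in a neighborhood of the covariance of interest; and (ii) the vanishing of every boundary term in the repeated integration by parts, which holds once $\Psi$ and its partial derivatives up to order $p$ grow strictly slower than $f_{\mathbf{x}}$ decays, so that the products $\Psi\,\partial f_{\mathbf{x}}$, $(\partial\Psi)\,f_{\mathbf{x}}$, and so on, vanish at infinity. These conditions hold, for instance, for $\Psi$ of at most polynomial growth together with its derivatives. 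I would also flag that in the eventual application the relevant $\Psi$ are products of ReLUs, whose coordinate derivatives are only distributional (Dirac masses on the kinks); there the appendix should either invoke a smoothing/limiting argument or read the derivatives in the weak sense, but the polynomially bounded regime already supplies everything used downstream.
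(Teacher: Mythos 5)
Your proof is correct and follows essentially the same route as the paper's: both reduce the claim to the Fourier-domain identity that differentiating the Gaussian characteristic function in $\sigma_{ij}$ multiplies it by $-\omega_i\omega_j$, i.e.\ $\partial f_{\mathbf{x}}/\partial\sigma_{ij}=\partial^2 f_{\mathbf{x}}/\partial x_i\partial x_j$, and then transfer the resulting $x$-derivatives onto $\Psi$ by integration by parts, using the Schwartz-class decay of the Gaussian to kill boundary terms. The only difference is organizational --- you iterate the single-pair heat identity $p/2$ times over the disjoint pairs $(i,i+1)$ with $i$ odd, whereas the paper takes all $p/2$ covariance derivatives of the characteristic function at once before a single integration by parts --- and your closing caveat about the distributional (Dirac) derivatives of the ReLU products in the downstream application is a fair point that the paper leaves implicit under its ``mild assumptions.''
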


\noindent Lemma (\ref{price_exten}) relates the mean of the gradients/subgradients of any nonlinear function to the gradients/subgradients of the mean of that function. This lemma has Price's theorem \cite{price1958useful} as a special case when the function $\Psi(\mathbf{x})$ has the structure $\Psi(\mathbf{x}) = \prod_i^n \Psi_i(x_n)$ with  $\Sigma(i,i) = 1 ~\forall i$. It is worthwhile to note that there is an extension to Price's theorem \cite{mcmahon1964extension}, where the assumptions $\sigma^2_{ii} = 1 ~\forall i$ and $\Psi(\mathbf{x}) = \prod_i^n \Psi_i(x_i)$ are dropped; however, it only holds for the bivariate case, \ie $n = 2$, and thus is also a special case of Lemma (\ref{price_exten}).

\begin{lemma}
\label{lemma3}
For any bivariate Gaussian random variable $\mathbf{x}=[x_1,x_2]^{\top}$ $\sim$ $\mathcal{N}(\mathbf{0}_2,\Sigma_x)$, the following holds for $T(x_1,x_2) = \max(x_1,0)\max(x_2,0)$:
\begin{equation}
\begin{aligned}
&\mathbb{E}[T(x_1,x_2)] = \\
&\frac{1}{2\pi} \left(\sigma_{12} \sin^{-1}\left(\frac{\sigma_{12}}{\sigma_1 \sigma_2} \right) +  \sigma_1 \sigma_2 \sqrt{1 - \frac{\sigma_{12}^2}{\sigma_1^2 \sigma_2^2}} \right) + \frac{\sigma_{12}}{4}
\notag
\end{aligned}
\end{equation}
where $\sigma_{ij} = \Sigma_x(i,j) ~\forall i\neq j$ and $\sigma^2_i = \Sigma_x(i,i)$.
\end{lemma}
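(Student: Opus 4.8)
The plan is to obtain $\mathbb{E}[T]$ by integrating a differential equation in the cross-covariance $\sigma_{12}$ (holding $\sigma_1,\sigma_2$ fixed), where the differential equation comes from the extended Price's theorem of Lemma \ref{price_exten}, and then to pin down the integration constant using the independence case $\sigma_{12}=0$ together with Remark \ref{remark1}.

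Concretely, I would apply Lemma \ref{price_exten} with $n=2$ and $p=2$ to the map $\Psi(\mathbf{x})=T(x_1,x_2)=\max(x_1,0)\max(x_2,0)$. Since $\partial_{x_1}T = u(x_1)\max(x_2,0)$ and hence $\partial^2 T/\partial x_1\partial x_2 = u(x_1)u(x_2)$ (in the distributional sense, the boundary point mass being irrelevant under the Gaussian density), Lemma \ref{price_exten} gives
\[
\frac{\partial\,\mathbb{E}[T]}{\partial \sigma_{12}} \;=\; \mathbb{E}\!\left[\frac{\partial^2 T}{\partial x_1\partial x_2}\right] \;=\; \mathbb{E}\big[u(x_1)u(x_2)\big] \;=\; \Pr\!\big(x_1>0,\,x_2>0\big).
\]
I would evaluate this orthant probability by the same device: applying Lemma \ref{price_exten} to $\Psi(\mathbf{x})=u(x_1)u(x_2)$ yields $\partial_{\sigma_{12}}\Pr(x_1>0,x_2>0)=\mathbb{E}[\delta(x_1)\delta(x_2)]=f_{\mathbf{x}}(\mathbf{0}) = \big(2\pi\sigma_1\sigma_2\sqrt{1-\sigma_{12}^2/(\sigma_1^2\sigma_2^2)}\big)^{-1}$; integrating in $\sigma_{12}$ gives $\tfrac{1}{2\pi}\sin^{-1}(\sigma_{12}/(\sigma_1\sigma_2))+C'$, and the constant $C'=\tfrac14$ follows from the independent case $\sigma_{12}=0$, where $\Pr(x_1>0)\Pr(x_2>0)=\tfrac14$. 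This recovers the classical formula $\Pr(x_1>0,x_2>0)=\tfrac14+\tfrac{1}{2\pi}\sin^{-1}(\sigma_{12}/(\sigma_1\sigma_2))$, which one may also simply cite.

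Substituting back, $\partial_{\sigma_{12}}\mathbb{E}[T]=\tfrac14+\tfrac{1}{2\pi}\sin^{-1}(\sigma_{12}/(\sigma_1\sigma_2))$, and integrating once more in $\sigma_{12}$ using $\int\sin^{-1}(t)\,dt = t\sin^{-1}(t)+\sqrt{1-t^2}$ (with $t=\sigma_{12}/(\sigma_1\sigma_2)$) produces exactly the claimed expression up to an additive constant $C$. To fix $C$, set $\sigma_{12}=0$: then $x_1$ and $x_2$ are independent, so $\mathbb{E}[T]=\mathbb{E}[\max(x_1,0)]\,\mathbb{E}[\max(x_2,0)]=\tfrac{\sigma_1}{\sqrt{2\pi}}\cdot\tfrac{\sigma_2}{\sqrt{2\pi}}=\tfrac{\sigma_1\sigma_2}{2\pi}$ by Remark \ref{remark1}, which equals the candidate expression at $\sigma_{12}=0$ only if $C=0$. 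This completes the derivation.

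The main obstacle is the justification of the two applications of Lemma \ref{price_exten} to the non-smooth integrands $T$ and $u(x_1)u(x_2)$: one must argue that the ``mild assumptions'' of that lemma accommodate subgradients and distributional derivatives (so that $\partial_{x_1}\max(x_1,0)=u(x_1)$ and $\partial_{x_2}u(x_2)=\delta(x_2)$ are legitimate inside the expectation), and that differentiation under the expectation sign is valid uniformly in $\sigma_{12}$ on any interval bounded away from the degenerate correlations $\pm1$. Everything else — the two single-variable integrations and the two boundary evaluations at $\sigma_{12}=0$ — is routine.
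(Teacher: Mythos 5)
Your proposal is correct and follows essentially the same route as the paper: the paper applies Lemma \ref{price_exten} (with $p=4$) to obtain $\partial^2\,\mathbb{E}[T]/\partial\sigma_{12}^2 = f_{\mathbf{x}}(\mathbf{0})$ and integrates twice with the same two boundary conditions at $\sigma_{12}=0$ (value $\sigma_1\sigma_2/2\pi$ and derivative $\mathbb{E}[u(x_1)u(x_2)]=1/4$) that you use. Splitting this into two first-order integrations via the orthant probability is only a cosmetic reorganization of the same argument.
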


\begin{theorem}
\label{theo2}
For any function in the form of $\mathbf{g}(\mathbf{x})$ where $\mathbf{x} \sim \mathcal{N} \left(\mathbf{0}_n, \Sigma_x \right)$ and that $\mathbf{c}_1 = \mathbf{0}_p$ then: 
\begin{equation}
\begin{aligned}
&\mathbb{E}[\mathbf{g}_i^2(\mathbf{x})] = \\
& 2 \sum_{v_1}^k \sum_{v_2}^{v_1-1} \mathbf{B}(i,v_1)\mathbf{B}(i,v_2) 
\left(\frac{\bar{\sigma}_{v_1 v_2}}{2\pi} \sin^{-1}\left(\frac{\bar{\sigma}_{v_1 v_2}}{\bar{\sigma}_{v_1} \bar{\sigma}_{v_2}} \right) +\right. \\
&\left.  \frac{\bar{\sigma}_{v_1} \bar{\sigma}_{v_2}}{2\pi} \sqrt{1 - \frac{\bar{\sigma}_{v_1 v_2}^2}{\bar{\sigma}_{v_1}^2 \bar{\sigma}_{v_2}^2}}  + \frac{\bar{\sigma}_{v_1 v_2}}{4} \right) + \frac{1}{2}\sum_r^k \mathbf{B}(i,r)^2 \bar{\sigma}_r^2 + \mathbf{c}_2(i)
\notag
\end{aligned} 
\end{equation}
\end{theorem}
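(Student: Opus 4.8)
The plan is to reduce the computation to the one-dimensional squared-ReLU moment of Lemma \ref{lemma1} and the bivariate ReLU-product moment of Lemma \ref{lemma3}. First I would change variables to $\mathbf{y} = \mathbf{A}\mathbf{x}$; since $\mathbf{x}\sim\mathcal{N}(\mathbf{0}_n,\Sigma_x)$ and $\mathbf{c}_1=\mathbf{0}_p$, the vector $\mathbf{y}$ is zero-mean Gaussian with covariance $\bar{\Sigma}=\mathbf{A}\Sigma_x\mathbf{A}^\top$, and $\mathbf{g}_i(\mathbf{x})=\sum_v \mathbf{B}(i,v)\max(y_v,0)+\mathbf{c}_2(i)$. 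Expanding the square and using linearity of expectation,
\begin{align}
\mathbb{E}[\mathbf{g}_i^2(\mathbf{x})]=\sum_{v_1}\sum_{v_2}\mathbf{B}(i,v_1)\mathbf{B}(i,v_2)\,\mathbb{E}\!\left[\max(y_{v_1},0)\max(y_{v_2},0)\right]+\big(\text{terms depending on }\mathbf{c}_2(i)\big),
\notag
\end{align}
where the last group equals $2\mathbf{c}_2(i)\,\big(\mathbb{E}[\mathbf{g}_i(\mathbf{x})]-\mathbf{c}_2(i)\big)+\mathbf{c}_2(i)^2$ and is obtained verbatim from Theorem \ref{theo1}; in the displayed statement these are written compactly as the trailing ``$+\,\mathbf{c}_2(i)$''.

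Second, I would split the double sum into the diagonal part $v_1=v_2$ and the off-diagonal part $v_1\neq v_2$. On the diagonal, $y_r\sim\mathcal{N}(0,\bar{\sigma}_r^2)$ with $\bar{\sigma}_r^2=\bar{\Sigma}(r,r)$, so Lemma \ref{lemma1} gives $\mathbb{E}[\max^2(y_r,0)]=\bar{\sigma}_r^2/2$, yielding $\tfrac12\sum_r\mathbf{B}(i,r)^2\bar{\sigma}_r^2$. Off the diagonal, the key observation is that each pair $(y_{v_1},y_{v_2})$ is a marginal of the Gaussian $\mathbf{y}$ on two coordinates, hence jointly zero-mean Gaussian with covariance entries exactly $\bar{\sigma}_{v_1}^2$, $\bar{\sigma}_{v_2}^2$, and $\bar{\sigma}_{v_1v_2}=\bar{\Sigma}(v_1,v_2)$. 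Lemma \ref{lemma3} then applies directly to $T(y_{v_1},y_{v_2})=\max(y_{v_1},0)\max(y_{v_2},0)$, and since its value is symmetric in $(v_1,v_2)$ I can replace $\sum_{v_1\neq v_2}$ by $2\sum_{v_1}\sum_{v_2<v_1}$. Collecting the diagonal term, the doubled off-diagonal sum, and the $\mathbf{c}_2(i)$ contribution reproduces the claimed formula.

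The expansion and the index bookkeeping are routine; the only genuine subtlety here is the degenerate case $\bar{\sigma}_v=0$ (a zero row of $\mathbf{A}$, or $\Sigma_x$ rank-deficient along that direction), where the $\sin^{-1}$ and square-root factors must be read as limits --- then $y_v=0$ almost surely, every product containing it has zero mean, and the offending summands vanish, so the formula persists by continuity. I expect the real difficulty of this development to sit in the prerequisites rather than in Theorem \ref{theo2} itself: establishing Lemma \ref{lemma3}, and in turn the Price-type identity of Lemma \ref{price_exten}, is where the work lies, whereas the theorem is essentially an assembly of Lemmas \ref{lemma1} and \ref{lemma3} over the coordinates of $\mathbf{A}\mathbf{x}$.
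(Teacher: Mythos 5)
Your proposal is correct and follows essentially the same route as the paper's proof: expand the square of $\sum_v \mathbf{B}(i,v)\max((\mathbf{A}\mathbf{x})_v,0)$, handle the diagonal terms with Lemma \ref{lemma1} and the cross terms with Lemma \ref{lemma3} applied to the bivariate marginals of $\mathbf{A}\mathbf{x}$. Your accounting of the $\mathbf{c}_2(i)$ cross-terms and of the degenerate case $\bar{\sigma}_v=0$ is in fact slightly more careful than the paper's, which simply drops the constant from the auxiliary function $\tilde{g}$.
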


\noindent Lastly, the variance of $\mathbf{g}(\mathbf{x})$ can be directly derived: $\text{var}(\mathbf{g}_i(\mathbf{x})) = \mathbb{E}[\mathbf{g}_i^2(\mathbf{x})] - \mathbb{E}[\mathbf{g}_i(\mathbf{x})]^2\vert_{\mu_x = \bar{\mathbf{0}}_k}$. While the previous expression assumes a zero-mean Gaussian input and bias-free first layer, \ie $\mathbf{c}_1=0$, we extend these results next to arbitrary Gaussian distributions without assumptions on $\mathbf{c_1}$. The key element here is to extend the result of Lemma \ref{lemma3}.

\begin{lemma} \label{lemma4}
For any bivariate Gaussian $\mathbf{x} \sim \mathcal{N}\left(\mu_x,\Sigma_x \right)$, where $\mu_x =[\mu_1, \mu_2]^\top$ and $\Sigma = \begin{bmatrix}
    \sigma_1^2  & \rho\sigma_1\sigma_2 \\
    \rho\sigma_1\sigma_2  & \sigma_2^2
\end{bmatrix}$, then we have that

\begin{equation}
\label{eq:big_result}
\begin{aligned}
&\mathbb{E}[\max({x_1},{0}) \max({x_2},{0})] =  \Omega(\mu_1,\mu_2,\sigma_1,\sigma_2,\rho)  \\
+
&\begin{cases}
\frac{\mu_1\mu_2+\rho\sigma_1\sigma_2}{\pi} \left(I_{a_1,b_1}\left(\infty\right) - I_{a_1,b_1}\left(\frac{-\mu_2}{\sqrt{2}\sigma_2}\right)\right),
\\
~~~~~~~~~~~~~~~~~~~~~~~~
\text{for \,} |\rho|<\frac{1}{\sqrt{2}},\\

\frac{\mu_1\mu_2+\rho\sigma_1\sigma_2}{\pi} \Biggl[ \frac{\pi}{4} \text{sign}(\rho) + \frac{\pi}{4}\text{erf}(\frac{\mathbf{e}_1^\top \tilde{\Sigma}\mu_x}{\sqrt{2}}) \text{erf}(\frac{\mu_2}{\sqrt{2}\sigma_2}) \\
- \text{sign}(\rho) \left(
I_{a_2,b_2}\left(\infty\right)-I_{a_2,b_2}\left(\text{sign}(\rho)\frac{\mathbf{e}_1^\top\tilde{\Sigma}\mu_x}{\sqrt{2}}\right)\right) \Biggr], \\
~~~~~~~~~~~~~~~~~~~~~~~~ \text{for \,} |\rho|>\frac{1}{\sqrt{2}}.
\end{cases}
\end{aligned}
\end{equation}

where 
\begin{equation}
\begin{aligned}
\label{omega_expression}
    \Omega &=  \frac{\sqrt{|\Sigma|}}{2\pi}  \exp \left(-\frac{1}{2}
\mu_x^\top \Sigma^{-1} \mu_x
\right) + \frac{\mu_1\sigma_2}{2\sqrt{2\pi}} 
\exp\left(\frac{-\mu_2^2}{2\sigma_2^2}\right) \\
&\left( 1 + \text{erf}\biggl(\frac{\mathbf{e}_1^\top \tilde{\Sigma}\mu_x}{\sqrt{2}}\biggr) \right) 
+ \frac{\mu_2\sigma_1}{2\sqrt{2\pi}} \exp\left(\frac{-\mu_1^2}{2\sigma_1^2}\right)\\
&\Biggl( 1 +  \text{erf}\biggl(\frac{\mathbf{e}_2^\top \tilde{\Sigma}\mu_x}{\sqrt{2}}\biggr) \Biggr) + \frac{\mu_1\mu_2+\rho\sigma_1\sigma_2}{4} \left( 1 + \text{erf}\left(\frac{\mu_2}{\sqrt{2}\sigma_2}\right) \right),
\end{aligned}
\end{equation}

and where
\begin{equation}
\begin{aligned}
\label{integration_identity}
&I_{a,b}(x) = \frac{\pi}{4}\text{erf}(x) \text{erf}(b) + \frac{\sqrt{\pi}}{2} \exp(-b^2)\\
& \sum_{u=0}^{\infty} \Bigg\{ \frac{(a/2)^{2u+1}}{\Gamma(u+\nicefrac{3}{2})} P(u+1,x^2)H_{2u}(b)\\
& - \frac{\text{sign}(x)(a/2)^{2u+2}}{\Gamma(u+2)}P(u+\nicefrac{3}{2},x^2)H_{2u+1}(b) \Bigg\}.
\end{aligned}
\end{equation}

Note that $\mathbf{e}_1$ and $\mathbf{e}_2$ are the two dimensional canonical vectors. Moreover, note that $\tilde{\Sigma} = \text{Diag}([\nicefrac{\sqrt{|\Sigma|}}{\sigma_2},\nicefrac{\sqrt{|\Sigma|}}{\sigma_1}])\Sigma^{-1}$ where $\text{Diag}(\mathbf{v})$ rearranges the elements of the vector $\mathbf{v}$ into a diagonal matrix and $|\Sigma|$ denotes the matrix determinant. The constants $a_1,b_1,a_2,$ and $b_2$ are $\nicefrac{\rho}{\sqrt{1-\rho^2}}$, $\nicefrac{\mu_1\sigma_2}{\sqrt{2|\Sigma|}}$, $\nicefrac{\sqrt{1-\rho^2}}{|\rho|}$ and $\nicefrac{-\mu_1}{\sqrt{2}\rho\sigma_1}$, respectively. Lastly, $H(.)$ is the Hermite polynomial, $P(.,.)$ is the normalized incomplete Gamma function and $\Gamma(.)$ is the standard Gamma function.
\end{lemma}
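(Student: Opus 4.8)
The engine of the proof is the extended Price's theorem (Lemma~\ref{price_exten}), applied with $p=2$ and $\Psi(x_1,x_2)=\max(x_1,0)\max(x_2,0)$. Since $\partial_{x_1}\max(x_1,0)=u(x_1)$ (the unit step, read in the distributional sense), the mixed partial is $\partial^2\Psi/\partial x_1\partial x_2=u(x_1)u(x_2)$, so Lemma~\ref{price_exten} gives $\partial\,\mathbb{E}[T]/\partial\sigma_{12}=\mathbb{E}[u(x_1)u(x_2)]=\Pr(x_1>0,\ x_2>0)$, the orthant probability of the bivariate Gaussian, and writing $\sigma_{12}=\rho\sigma_1\sigma_2$ with $\sigma_1,\sigma_2$ held fixed turns this into $\partial\,\mathbb{E}[T]/\partial\rho=\sigma_1\sigma_2\,\Pr(x_1>0,\ x_2>0)$. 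This is exactly the step that makes Lemma~\ref{lemma3} easy: for $\mu_x=\mathbf{0}_2$ the orthant probability collapses to the elementary $\tfrac14+\tfrac1{2\pi}\sin^{-1}\rho$, and a single integration in $\rho$ yields the arcsine formula. For general $\mu_x$, however, $\Pr(x_1>0,x_2>0)$ is a reflection of the bivariate normal CDF, which has no elementary closed form, and this is the source of every complication in \eqref{eq:big_result}.

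I would recover $\mathbb{E}[T]$ by integrating the above identity in $\rho$ from $\rho=0$ to the target correlation. At $\rho=0$ the coordinates are independent, so the constant of integration is $\mathbb{E}[T]\big|_{\rho=0}=\mathbb{E}[\max(x_1,0)]\,\mathbb{E}[\max(x_2,0)]$, with each factor equal to the one-dimensional ReLU mean $\tfrac{\mu}{2}\big(1+\text{erf}(\tfrac{\mu}{\sqrt2\,\sigma})\big)+\tfrac{\sigma}{\sqrt{2\pi}}\exp(-\tfrac{\mu^2}{2\sigma^2})$ already used in Theorem~\ref{theo1} (and in Remark~\ref{remark1}). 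To make the remaining $\rho$-integral tractable I would either condition one coordinate on the other to write the orthant probability as a one-dimensional Gaussian integral, or, more cleanly, use the classical Plackett/Sheppard identity $\partial_\rho\,\Pr(Z_1>h,Z_2>k)=\varphi_2(h,k;\rho)$ (bivariate standard normal density) so that, after one further differentiation, everything reduces to a single explicit integral in $\rho$ of the Gaussian kernel $\varphi_2(-\mu_1/\sigma_1,-\mu_2/\sigma_2;\rho)$; the two integration constants are then pinned down by the $\rho=0$ value above and by $\partial_\rho\mathbb{E}[T]\big|_{\rho=0}=\sigma_1\sigma_2\,\Pr(x_1>0)\Pr(x_2>0)$.

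The main obstacle is evaluating that $\rho$-integral of the Gaussian kernel in closed form. After completing the square in the exponent and factoring out the part that does not involve the cross term, the remaining exponential of the cross term must be expanded as a power series; the substitution $t=\rho/\sqrt{1-\rho^2}$ (effective when $|\rho|$ is small) or $t=\sqrt{1-\rho^2}/|\rho|$ (effective when $|\rho|$ is near $1$) converts the term-by-term integrals into the Gaussian moments $\int t^{2u}e^{-ct^2}\,dt$ and $\int t^{2u+1}e^{-ct^2}\,dt$, which are precisely the normalized incomplete Gamma functions $P(\cdot,\cdot)$, while the Hermite polynomials $H_{2u},H_{2u+1}$ of $I_{a,b}$ emerge from resumming the expansion and from differentiating the Gaussian factor at the endpoints. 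This is exactly where the dichotomy $|\rho|<\tfrac1{\sqrt2}$ versus $|\rho|>\tfrac1{\sqrt2}$ comes from: it is the regime boundary $a_1=a_2=1$ at which one or the other of the two series representations (one organized around $\rho=0$, the other around $|\rho|=1$) is the valid/convergent one, and the careful bookkeeping of the two expansions, together with collecting all the non-series endpoint contributions into the term $\Omega$ of \eqref{omega_expression}, is the bulk of the technical work.

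The remaining steps are bookkeeping: re-express everything through $\sqrt{|\Sigma|}=\sigma_1\sigma_2\sqrt{1-\rho^2}$, the matrix $\tilde\Sigma$, the canonical vectors $\mathbf{e}_1,\mathbf{e}_2$, and the constants $a_1,b_1,a_2,b_2$, which gives \eqref{eq:big_result}--\eqref{integration_identity}. As a consistency check I would set $\mu_1=\mu_2=0$: then $b_1=0$, the odd-Hermite part of $I_{a_1,b_1}$ vanishes, the series must collapse to the $\sin^{-1}$ term, and $\Omega$ reduces to $\tfrac{\sqrt{|\Sigma|}}{2\pi}+\tfrac{\rho\sigma_1\sigma_2}{4}$, recovering Lemma~\ref{lemma3}.
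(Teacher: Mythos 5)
Your route is genuinely different from the paper's. The paper never invokes Lemma \ref{price_exten} for this lemma: it integrates out $x_1$ first by conditioning, $\mathbb{E}[T]=\int_0^\infty x_2 f_{X_2}(x_2)\int_0^\infty x_1 f_{X_1|X_2}(x_1|x_2)\,dx_1\,dx_2$, evaluates the inner integral with the one-dimensional ReLU-mean formula, and after integration by parts and substitutions reduces everything to the single one-dimensional integral $\int_\kappa^\infty e^{-z^2}\,\text{erf}(az+b)\,dz$ with $a=\rho/\sqrt{1-\rho^2}$, $b=\mu_1/(\sqrt{2}\sigma_1\sqrt{1-\rho^2})$, $\kappa=-\mu_2/(\sqrt{2}\sigma_2)$. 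The series \eqref{integration_identity} is then not derived at all; it is quoted from identities (2.1)--(2.2) of \cite{fayed2014evaluation}, and the dichotomy at $|\rho|=1/\sqrt{2}$ is exactly the convergence condition $|a|\lessgtr 1$ of that quoted series, with the divergent regime handled by one further integration by parts that trades $I_{a,b}$ for $I_{1/|a|,-b/a}$. Your Price-plus-Plackett opening is sound and mirrors how the paper proves the zero-mean Lemma \ref{lemma3}: $\partial\mathbb{E}[T]/\partial\sigma_{12}=\mathbb{E}[u(x_1)u(x_2)]$ is the orthant probability, the boundary data at $\rho=0$ are as you state, and one more derivative gives the joint density at the origin.

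The gap is the final step, which is the entire technical content of the lemma. After your reductions you are left with $\sigma_1\sigma_2\int_0^\rho(\rho-s)\,\varphi_2(h,k;s)\,ds$ with $h=-\mu_1/\sigma_1$, $k=-\mu_2/\sigma_2$; unlike the zero-mean case, the exponent $-(h^2-2shk+k^2)/(2(1-s^2))$ depends on $s$, so nothing integrates to an arcsine. You assert that a power-series expansion of the cross term plus the substitutions $t=s/\sqrt{1-s^2}$ or $t=\sqrt{1-s^2}/|s|$ produces incomplete Gamma functions and Hermite polynomials that resum to \eqref{integration_identity}, but this is precisely the computation that must be exhibited, and it targets a different integral from the one \eqref{integration_identity} actually evaluates: $I_{a,b}(x)$ is an integral over a spatial variable (the rescaled $x_2$), with limits $-\mu_2/(\sqrt{2}\sigma_2)$ and $\infty$, whereas yours runs over the correlation parameter. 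The standard closed forms for $\rho$-integrals of $\varphi_2$ pass through Owen's $T$-function rather than through $\int e^{-t^2}\text{erf}(at+b)\,dt$, so matching your expansion to the stated $I_{a_1,b_1}$ and $I_{a_2,b_2}$ terms, and isolating exactly the $\Omega$ of \eqref{omega_expression} as the non-series contribution, would be a substantial independent derivation that the proposal does not perform. Your consistency check at $\mu_1=\mu_2=0$ is correct and is carried out in the paper's appendix as a separate proposition.
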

\begin{proof}
This is a sketch of the proof.

\begin{equation}
\begin{aligned}
\label{first_integration}
I_0 &= \mathbb{E}[\max(x_1,0)\max(x_2,0)] \\
& = \int_{0}^{\infty} \int_{0}^{\infty} x_1 x_2 f_{X_1,X_2}(x_1,x_2) dx_1dx_2 \\
& = \int_{0}^{\infty} x_2 f_{X_2}(x_2) \int_{0}^{\infty} x_1 f_{X_1|X_2}(x_1|x_2) dx_1dx_2 \\
& = \frac{1}{\sigma_2} \int_{0}^{\infty} x_2 f_{X_2}(x_2) \Bigg [\frac{\sqrt{|\Sigma|}}{2 \pi} \exp\left(\frac{r^2(x_2)}{2 |\Sigma|}\right) \\
& + r(x_2) \Phi\left(\frac{r(x_2)}{\sqrt{|\Sigma|}}\right)  \Bigg] dx_2,
\end{aligned}
\end{equation}
where $r(x_2) = \mu_1\sigma_2 +\rho \sigma_1(x_2-\mu_2)$. The functions $f_{X_1,X_2}$, $f_{X_1|X_2}$ and $f_{X_2}$ are the joint bivariate, conditional and marginal Gaussian distributions. By integration by parts, Leibniz's rule, some identities and substitutions, Equation \eqref{first_integration} reduces to:

\begin{equation}
\begin{aligned}
\label{eq:trouble_sum_integral}
I_0 &= \Omega(\mu_1,\mu_2,\sigma_1,\sigma_2,\rho) \\
& + \frac{\mu_1\mu_2+\rho\sigma_1\sigma_2}{2\sqrt{\pi}} \underbrace{\int_{\frac{-\mu_2}{\sqrt{2}\sigma_2}}^{\infty} \exp(-z^2) \text{erf}\left(\frac{\rho z + \nicefrac{\mu_1}{\sqrt{2}\sigma_1}
}{\sqrt{(1-\rho^2})}\right)}_{\int_{\kappa}^\infty\phi(z)dz} dz,
\end{aligned}
\end{equation}

\noindent where $\Omega(\mu_1,\mu_2,\sigma_1,\sigma_2,\rho)$ is given by Equation \eqref{omega_expression}. As for the remaining integral, we exploit identities (2.1) and (2.2) in \cite{fayed2014evaluation}, which states that $I_{a,b}(x) = \frac{\sqrt{\pi}}{2} \int_{0}^{x} \exp(-t^2)\text{erf}(at+b)dt$ has a closed form solution given in Equation \eqref{integration_identity}. Thus, one can represent the integral in Equation (\ref{eq:trouble_sum_integral}) as $\frac{\sqrt{\pi}}{2}\int_{\kappa}^{\infty} \phi(z) dz = I_{a,b}(\infty) - I_{a,b}(\kappa)$  where $a = \rho / \sqrt{1-\rho^2}$, $b =  \mu_1 / (\sqrt{2}\sigma_1 \sqrt{1-\rho^2})$ and $\kappa = -\mu_2 / \sqrt{2}\sigma_2$. Now note that the infinite series corresponding to $I_{a,b}(x)$ and $I_{a,b}(\infty)$ in Equation \eqref{integration_identity} converges when $|a| < 1$ or equivalently $|\rho| < 1 / \sqrt{2}$ which proves the first case in Equation \eqref{eq:big_result}. As for the case $|\rho| > 1 / \sqrt{2}$, by integrating the integral in Equation \eqref{eq:trouble_sum_integral} by parts, we have

\begin{equation}
\begin{aligned}
\label{eq:other_integration_by_parts}
&\frac{\sqrt{\pi}}{2} \int_{\kappa}^{\infty} \exp(-t^2)\text{erf}(at+b)dt = - \frac{\pi}{4}\text{erf}\left(a\kappa+b\right)\text{erf}(\kappa) \\
&+ \text{sign}(a) \left(\frac{\pi}{4}  -   I_{\nicefrac{1}{|a|}, \nicefrac{-b}{a}}(\infty) + I_{\nicefrac{1}{|a|}, \nicefrac{-b}{a}}\left(\left(a \kappa + b\right)\text{sign}(a)\right)\right).
\end{aligned}
\end{equation}

\noindent Note that the series from the identity replacing $I_{\nicefrac{1}{|a|},\nicefrac{-b}{a}}$ converges when $|a|>1$ or equivalently $|\rho| > 1/\sqrt{2}$. Thus, substituting this result back in Equation \eqref{eq:trouble_sum_integral} derives the second case of Equation \eqref{eq:big_result} and completing the proof.
\end{proof}

\noindent Following Theorem \ref{theo2}, a closed form expression for $\mathbb{E}[\mathbf{g}_i^2(\mathbf{x})]$ under generic Gaussian distributions can be derived by substituting the result from Lemma \ref{lemma4} (in lieu of Lemma \ref{lemma3}) in the proof of Theorem \ref{theo2} deriving an expression for the variance of $\mathbf{g}_i(\mathbf{x})$. Moreover, we show in the \textbf{Appendix} that Equation \eqref{eq:big_result} recovers Lemma \ref{lemma3} for when $\mu_1 = \mu_2 = 0$.

\subsection{Extension to Deeper PL-DNNs} \label{sec:extension_2_deeper_networks}
\noindent To extend the previous results to deeper DNNs that are not in the form (Affine, ReLU, Affine), we first denote the larger DNN as $\mathbf{R}(\mathbf{x}) :\mathbb{R}^n \rightarrow \mathbb{R}^d$ (\eg a mapping of the input to the logits of $d$ classes). By choosing the $l^{\text{th}}$ ReLU layer, any $\mathbf{R}(.)$ can be decomposed into: $\mathbf{R}(\mathbf{x}) = \mathbf{R}_{l+1} \left(\text{ReLU}_{l} \left(\mathbf{R}_{l-1}\left( \mathbf{x}\right) \right) \right)$. In this paper, we employ a simple two-stage linearization based on Taylor series approximation to cast  $\mathbf{R}(.)$  into the form (Affine, ReLU, Affine). For example, we can linearize it around points $\mathbf{x}$ and $\mathbf{y} = \text{ReLU}_{l}\left(\mathbf{R}_{l-1}(\mathbf{x}) \right)$, such that $\mathbf{R}_{l-1}(\mathbf{x}) \approx \mathbf{A}\mathbf{x} + \mathbf{c}_1$ and $\mathbf{R}_{l+1}(\mathbf{y}) \approx \mathbf{B}\mathbf{y} + \mathbf{c}_2$. The resulting function after linearization is $\mathbf{R}(\mathbf{x}) \approx \mathbf{B} \text{ReLU}_{l}\left(\mathbf{A} \mathbf{x} + \mathbf{c}_1\right) + \mathbf{c}_2$. Figure \ref{pull_fig} shows this two-stage linearization. Details in regards to the selection of the layer of linearization $l$ and the points of linearization are discussed thoroughly next.

\section{Experiments}\label{exps}
\noindent In this section, we discuss a variety of experiments to provide the following insights. \textbf{(i)} Although the derived output variance of the  Affine-ReLU-Affine network based on Equation \eqref{eq:big_result} is impractical, the infinite sum can be accurately approximated with as few as 20 terms leading to an efficient computation. \textbf{(ii)} We conduct several controlled experiments to investigate the choice of the linearization layer $l$, at which two-stage linearization is performed. We also validate the tightness of both the first and second moment expressions for deeper networks under different linearization points, as well as, showing that the new derived variance based on Lemma \ref{lemma4} is much tighter than the one based on Lemma \ref{lemma3} for general input Gaussian distributions. \textbf{(iii)} Lastly, extensive experiments on MNIST and Emotion datasets validate that our derived expressions can be used to construct targeted and non-targeted adversarial Gaussian attacks. In particular, and following the recent successes of sparse pixel attacks \cite{modas019sparsefool}, we demonstrate that our expressions can indeed be utilized to design sparse and smooth Gaussian perturbations leading to perceptually feasible input attacks.

\subsection{On the Efficacy of Approximating Equation \eqref{eq:big_result}}
Computing the variance of the Affine-ReLU-Affine network, \ie $\mathbf{g}(\mathbf{x})$, under general Gaussian input $\mathbf{x}$, as per Equation \eqref{eq:big_result} in Lemma \ref{lemma4}, requires the evaluation of Equation \eqref{integration_identity}, which is impractical as it involves an infinite series. We show here that the series can be sufficiently well approximated with as few as 20 terms. To demonstrate this along with the sensitivity of Equation \eqref{eq:big_result} to $\mu_1$, $\mu_2$, $\sigma_1$, $\sigma_2$ and $\rho$, we report the \textit{maximum absolute error} between the Monte Carlo estimates of $\mathbb{E}[\max(x_1,0)\max(x_2,0)]$ and truncated versions of the sum in Equation \eqref{eq:big_result} with $1$, $5$, $10$, $20$, and $50$ terms over a grid of all combinations of the five arguments. In particular, $\mu_1$ and $\mu_2$ are sampled uniformly from the grid $[-2,2]$, $\sigma_1$ and $\sigma_2$ are on the uniform grid $[0.2,2]$, and lastly $\rho$ is sampled uniformly from the grid $[-0.7,0.7]$, where all parameters are sampled with $0.2$ spacing. In addition, we also include $\rho = 0$ and $\rho = 0.999$. Figure \ref{fig:eff_of_apprx_1} reports the \textit{maximum absolute error} of all possible combinations of the aforementioned parameters in log-scale with an increasing number of terms of Equation \eqref{integration_identity}. We observe from Figure \ref{fig:eff_of_apprx_1} that, with as few as 20 terms, the \textit{maximum absolute error} between the Monte Carlo estimates and the truncated version of Equation \eqref{eq:big_result} is $10^{-2.5} \approx 0.003$. This occurs regardless of the choice of $\mu_1$, $\mu_2$, $\sigma_1$ and $\sigma_2$ and particularly when $\rho$ is close to $\pm \nicefrac{1}{\sqrt{2}} \approx \pm0.7$, which is the disjunction in Equation \eqref{eq:big_result}. Recall that the disjunction occurs at these values of $\rho$, since the infinite series diverges in such cases. On the other hand, the \textit{maximum absolute error} decreases rapidly so long as $\rho$ is away from $\pm \nicefrac{1}{\sqrt{2}}$. Now that Equation \eqref{eq:big_result} can reliably and efficiently be approximated with a small number of terms, deeming it efficient, the closed form expression of Equation \eqref{eq:big_result} can be used to compute the output variance of $\mathbf{g}(\mathbf{x})$ for various applications. Throughout all remaining experiments, we will use only 5 terms, since the absolute error is of order $\approx 10^{-4}$ for all choices of $\mu_1,\mu_2,\sigma_1,\sigma_2,\rho$ except for the improbable two singularities $\rho \in \{\pm\nicefrac{1}{\sqrt{2}}\}$.

\begin{figure}[t]
    \centering
    \includegraphics[width=0.45\textwidth]{./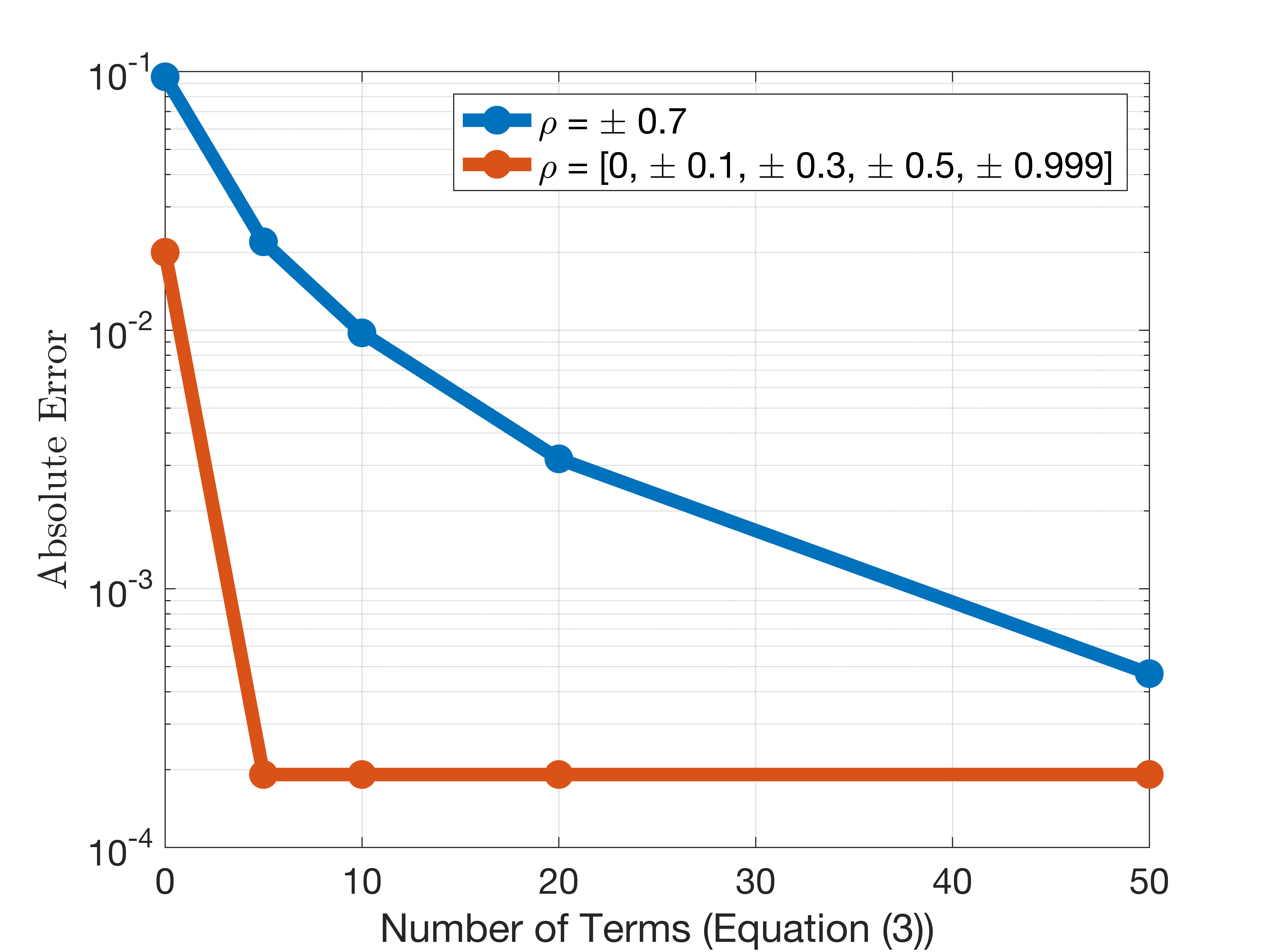}
    \caption{\textbf{Approximating Equation \eqref{eq:big_result}.} The \textit{maximum absolute error} between the Monte Carlo estimates and the truncated Equation \eqref{eq:big_result} for different parameterizations of the bivariate Gaussian decreases rapidly as the number of terms in the truncated sum increases. The two plots show that the error decreases very quickly and regardless of all the other parameters, when $\rho$ is not close to the disjunction, \ie $\rho = \pm \nicefrac{1}{\sqrt{2}}$.}
    \label{fig:eff_of_apprx_1}
\end{figure}

\subsection{Tightness of Network Moments} \label{sec:tightness_NM}
\vspace{4pt}\noindent \textbf{Choice of the Two-Stage Linearization Layer $l$}. The derived expressions for the first and second moments are for a small network in the form Affine-ReLU-Affine. As detailed in Subsection \ref{sec:extension_2_deeper_networks}, such results can be extended and applied to deeper networks through the proposed two-stage linearization. However, it is not clear how to choose the layer of linearization $l$. This subsection addresses this design choice by conducting an ablation to study the impact of varying $l$. In particular, we show that there is an intrinsic trade off between memory efficiency and linearization error for the choice of the layer $l$, around which two-stage linearization is performed. To illustrate this, consider the following network $\mathbf{R}(\mathbf{x}) = \mathbf{R}_{l+1}\left(\text{ReLU}(\mathbf{R}_{l-1}(\mathbf{x}))\right)$ where $\mathbf{R}(.): \mathbb{R}^n \rightarrow \mathbb{R}^d$, $\mathbf{R}_{l-1}(.) : \mathbb{R}^n \rightarrow \mathbb{R}^{q}$, and $\mathbb{R}_{l+1}: \mathbb{R}^{q} \rightarrow \mathbb{R}^{d}$. Performing two-stage linearization requires the memory of storing the Jacobians of the two-stage linearization $\nabla \mathbf{R}_{l-1}(.)$ and $\nabla \mathbf{R}_{l+1}(.)$, which is a total of $2q+n+d$ elements. When  $l$ is chosen to be small (early convolutional layers), the value $q$ is usually very large, as it is the total number of pixels across all feature maps. Meanwhile, when $l$ is large, $q$ is usually only the number of nodes in a fully connected layer. However, the choice of large $l$ in general leads to larger linearization error. To demonstrate this, we conduct experiments on the LeNet architecture \cite{lecun1999object} pretrained on the MNIST digit dataset \cite{lecun1998mnist}. Note that LeNet has a total of four layers, two of which are convolutional with max pooling and the other two are fully connected. We perform two-stage linearization on LeNet with a varying choice of $l$, where we compare the $\ell_2$ difference between the prediction scores of LeNet and the two-stage linearized version with the point of linearization taken to be a noisy version of a random image from the MNIST validation set. Table \ref{linearizations} demonstrates that the choice of smaller $l$ is best, in $\ell_2$ sense, for the two-stage linearization across all the various levels of noisy versions of the input. This implies a trade off between memory efficiency (better memory complexity with larger $l$) and accuracy  (better $\ell_2$ linearization error for smaller $l$). Therefore and due to memory constraints, $l$ is chosen to be the fully-connected layer just before the last ReLU activation in all experiments, unless stated otherwise. 

\begin{table}[t]
\footnotesize
\centering
\caption{\textbf{Varying the layer of linearization $l$.} The average approximation error, on a randomly sampled MNIST image corrupted with additive Gaussian noise, between LeNet and the two stage-linearized version increases as the layer of linearization $l$ increases.}
\begin{tabular}{c|c|c|c|c|c}
\small
Noise & $\pm$ 0.5 & $\pm$ 0.75 & $\pm$ 1 & $\pm$ 1.5 & $\pm$ 2 \\
\hline
$l=1$ & \textbf{0.0241} & \textbf{0.0362} & \textbf{0.0485} & \textbf{0.0730} & \textbf{0.0977} \\
$l=2$ & 0.0330 & 0.0497 & 0.0663 & 0.0996 & 0.1330 \\
$l=3$ & 0.0329 & 0.0495 & 0.0661 & 0.0993 & 0.1327 \\
\hline
\end{tabular}
\label{linearizations}
\end{table}

\vspace{4pt}\noindent \textbf{Tightness of Moment Expressions on LeNet.} It is conceivable that the two-stage linearization might impact the tightness of the derived moment expressions when applied to deeper real PL-DNNs. Here, we empirically study their tightness by comparing them against Monte Carlo estimates over $10^4$ samples on LeNet. Using the MNIST dataset, the input to the network is $\mathbb{R}^{28 \times 28}$ with 10 output classes (\ie $d=10$). In this case, following Section \ref{sec:extension_2_deeper_networks}, the two-stage linearization is performed such that $\mu_x = \mathbf{M}$, $\mathbf{y} = \mathbf{R}_{l-1}(\mathbf{x})$ and $l = 3$ for memory efficiency, where $\mathbf{M}$ is an image selected from the MNIST testing set. Thus, the input is $\mathbf{x} \sim \mathcal{N}\left(\mathbf{M},\Sigma \right)$ where we randomly generate a covariance matrix such that $\text{trace}(\Sigma) = \sigma^2(28 \times 28)$ with reasonable noise levels $\sigma \in \{0.001, 0.01, 0.05, 0.1\}$ when $\mathbf{M} \in [0,1]^{28 \times 28}$. Since the LeNet architecture has $d = 10$, we report the tightness of the analytic mean from Theorem \ref{theo1}, variance from Theorem \ref{theo2}, and the new general variance expression based on Lemma \ref{lemma4} for $\mathbf{g}_i(\mathbf{x})~\forall i$. As for the metric, we report the average \textit{absolute relative difference} $E_r(x, y) = \frac{2|x - y|}{|x| + |y|}$ of the analytic mean and variance expressions (Theorems \ref{theo1} and \ref{theo2}) to their Monte Carlo counterparts. We refer to each as $\mathbb{E}[\mathbb{E}_{error}] = \mathbb{E}\left[E_r(\mathbb{E}[\mathbf{g}_i(\mathbf{x})], \mathbb{E}_{MC})\right]$ and $\mathbb{E}[\text{var}_{error}] = \mathbb{E}\left[E_r(\text{var}({g}_i(\mathbf{x})), \text{var}_{MC})\right]$, respectively. Similarly, we refer to the error of the Monte Carlo estimates to the new variance expression based on Lemma \ref{lemma4} as $\mathbb{E}[\text{var}^{\text{new}}_{error}]$, where we find that the summation in Equation \eqref{integration_identity} can be truncated to only $5$ terms without scarifying much accuracy. We average the results over the complete MNIST test set. We report the tightness results across all classes in Table \ref{lenet_exactnesss}, where the closer the errors are to $0$ the better. For instance, at $\sigma = 0.05$, the \textit{absolute relative difference} for the mean expression of Theorem \ref{theo1} are close to $0.1$, \ie $\mathbb{E}[\mathbb{E}_{error}] \approx 0.11$. That is to say, the mean expression is tight even though two-stage linearization is being performed on a real network. Whereas, the variance expression of Theorem \ref{theo2} is less accurate, $\mathbb{E}[\text{var}_{error}] \approx 0.19$, and this can be attributed to the assumptions that do not hold (zero-mean input Gaussian and $\mathbf{c}_1=0$). On the other hand, the new general expression for the output variance based on Lemma \ref{lemma4} is significantly much tighter than the one from Theorem \ref{theo2}, as the errors compared to the Monte Carlo estimates are closer to $0$, \ie $\mathbb{E}[\text{var}_{error}^{\text{new}}] \approx 0.07$. This shows that our new variance expression is far tighter and less sensitive to two-stage linearization despite the truncation of the infinite series to as few as $5$ terms. Furthermore, complementing the results in Table \ref{lenet_exactnesss} and instead of reporting the \textit{absolute relative difference} alone, we visualize the histogram of LeNet output variances for all testing MNIST images under varying noise levels in Table \ref{histogram_plots_variance} for better interpretability of the results.

\begin{table}
\centering
\caption{\textbf{Tightness results across all classes on MNIST}. Using different values of input noise $\sigma$, the table shows that the mean expression of Theorem \ref{theo1} is tight and insensitive under two-stage linearization. Moreover, the table demonstrates that the new general variance expression based on Lemma \ref{lemma4} is far tighter, despite the truncation of the infinite series, compared the previous results from Theorem \ref{theo2}.}
\scalebox{0.95}{
\begin{tabular}{c|c|c|cc}
\toprule
$\sigma$ & $\mathbf{g}_i(\mathbf{x})$ & $\mathbb{E}[\mathbb{E}_{error}]$ & $\mathbb{E}[\text{var}_{error}]$ & $\mathbb{E}[\text{var}_{error}^{\text{new}}]$ \\
\midrule
0.001 & 0 &  $0.002 \pm 0.029$ &  $0.197 \pm 0.126$ &  $\mathbf{0.102 \pm 0.070}$ \\
      & 1 &  $0.001 \pm 0.011$ &  $0.195 \pm 0.148$ &  $\mathbf{0.065 \pm 0.051}$ \\
      & 2 &  $0.002 \pm 0.024$ &  $0.178 \pm 0.127$ &  $\mathbf{0.063 \pm 0.051}$ \\
      & 3 &  $0.004 \pm 0.038$ &  $0.255 \pm 0.190$ &  $\mathbf{0.146 \pm 0.118}$ \\
      & 4 &  $0.002 \pm 0.019$ &  $0.192 \pm 0.143$ &  $\mathbf{0.131 \pm 0.100}$ \\
      & 5 &  $0.002 \pm 0.022$ &  $0.157 \pm 0.115$ &  $\mathbf{0.075 \pm 0.067}$ \\
      & 6 &  $0.002 \pm 0.024$ &  $0.248 \pm 0.161$ &  $\mathbf{0.137 \pm 0.088}$ \\
      & 7 &  $0.002 \pm 0.034$ &  $0.225 \pm 0.157$ &  $\mathbf{0.120 \pm 0.101}$ \\
      & 8 &  $0.005 \pm 0.036$ &  $0.173 \pm 0.122$ &  $\mathbf{0.103 \pm 0.082}$ \\
      & 9 &  $0.003 \pm 0.030$ &  $0.182 \pm 0.132$ &  $\mathbf{0.057 \pm 0.057}$ \\
\cmidrule{2-5}& Avg &  $0.003 \pm 0.027$ &  $0.200 \pm 0.142$ &  $\mathbf{0.100 \pm 0.078}$ \\
\midrule
0.010 & 0 &  $0.021 \pm 0.098$ &  $0.202 \pm 0.127$ &  $\mathbf{0.026 \pm 0.021}$ \\
      & 1 &  $0.014 \pm 0.067$ &  $0.197 \pm 0.150$ &  $\mathbf{0.022 \pm 0.017}$ \\
      & 2 &  $0.023 \pm 0.104$ &  $0.176 \pm 0.126$ &  $\mathbf{0.029 \pm 0.022}$ \\
      & 3 &  $0.034 \pm 0.138$ &  $0.252 \pm 0.190$ &  $\mathbf{0.024 \pm 0.019}$ \\
      & 4 &  $0.022 \pm 0.111$ &  $0.190 \pm 0.139$ &  $\mathbf{0.024 \pm 0.020}$ \\
      & 5 &  $0.019 \pm 0.099$ &  $0.163 \pm 0.117$ &  $\mathbf{0.026 \pm 0.020}$ \\
      & 6 &  $0.017 \pm 0.074$ &  $0.255 \pm 0.163$ &  $\mathbf{0.024 \pm 0.019}$ \\
      & 7 &  $0.012 \pm 0.075$ &  $0.227 \pm 0.155$ &  $\mathbf{0.024 \pm 0.020}$ \\
      & 8 &  $0.041 \pm 0.149$ &  $0.178 \pm 0.125$ &  $\mathbf{0.027 \pm 0.022}$ \\
      & 9 &  $0.025 \pm 0.116$ &  $0.186 \pm 0.134$ &  $\mathbf{0.024 \pm 0.019}$ \\
\cmidrule{2-5}& Avg &  $0.023 \pm 0.103$ &  $0.203 \pm 0.143$ &  $\mathbf{0.025 \pm 0.020}$ \\
\midrule
0.050 & 0 &  $0.101 \pm 0.212$ &  $0.177 \pm 0.118$ &  $\mathbf{0.075 \pm 0.051}$ \\
      & 1 &  $0.083 \pm 0.166$ &  $0.189 \pm 0.145$ &  $\mathbf{0.059 \pm 0.042}$ \\
      & 2 &  $0.125 \pm 0.259$ &  $0.183 \pm 0.131$ &  $\mathbf{0.071 \pm 0.049}$ \\
      & 3 &  $0.146 \pm 0.320$ &  $0.248 \pm 0.179$ &  $\mathbf{0.068 \pm 0.049}$ \\
      & 4 &  $0.111 \pm 0.244$ &  $0.186 \pm 0.139$ &  $\mathbf{0.059 \pm 0.045}$ \\
      & 5 &  $0.099 \pm 0.241$ &  $0.159 \pm 0.116$ &  $\mathbf{0.065 \pm 0.047}$ \\
      & 6 &  $0.084 \pm 0.159$ &  $0.209 \pm 0.148$ &  $\mathbf{0.079 \pm 0.051}$ \\
      & 7 &  $0.061 \pm 0.163$ &  $0.225 \pm 0.167$ &  $\mathbf{0.076 \pm 0.055}$ \\
      & 8 &  $0.157 \pm 0.331$ &  $0.161 \pm 0.116$ &  $\mathbf{0.073 \pm 0.054}$ \\
      & 9 &  $0.119 \pm 0.260$ &  $0.179 \pm 0.131$ &  $\mathbf{0.075 \pm 0.049}$ \\
\cmidrule{2-5}& Avg &  $0.109 \pm 0.235$ &  $0.192 \pm 0.139$ &  $\mathbf{0.070 \pm 0.049}$ \\
\midrule
0.100 & 0 &  $0.177 \pm 0.257$ &  $\mathbf{0.159 \pm 0.121}$ &  $0.165 \pm 0.085$ \\
      & 1 &  $0.176 \pm 0.223$ &  $0.194 \pm 0.150$ &  $\mathbf{0.138 \pm 0.073}$ \\
      & 2 &  $0.265 \pm 0.367$ &  $0.209 \pm 0.151$ &  $\mathbf{0.139 \pm 0.079}$ \\
      & 3 &  $0.243 \pm 0.410$ &  $0.263 \pm 0.178$ &  $\mathbf{0.149 \pm 0.083}$ \\
      & 4 &  $0.232 \pm 0.344$ &  $0.200 \pm 0.155$ &  $\mathbf{0.126 \pm 0.078}$ \\
      & 5 &  $0.189 \pm 0.321$ &  $0.165 \pm 0.125$ &  $\mathbf{0.131 \pm 0.077}$ \\
      & 6 &  $0.158 \pm 0.211$ &  $\mathbf{0.161 \pm 0.123}$ &  $0.178 \pm 0.085$ \\
      & 7 &  $0.130 \pm 0.225$ &  $0.251 \pm 0.197$ &  $\mathbf{0.172 \pm 0.091}$ \\
      & 8 &  $0.251 \pm 0.415$ &  $\mathbf{0.161 \pm 0.120}$ &  $0.163 \pm 0.089$ \\
      & 9 &  $0.231 \pm 0.355$ &  $0.195 \pm 0.143$ &  $\mathbf{0.167 \pm 0.083}$ \\
\cmidrule{2-5}& Avg &  $0.205 \pm 0.313$ &  $0.196 \pm 0.146$ &  $\mathbf{0.153 \pm 0.082}$ \\
\bottomrule
\end{tabular}
}
\label{lenet_exactnesss}
\end{table}

\begin{table*}[!t]
\centering
\caption{\textbf{LeNet variance histograms on MNIST}. The table shows the histogram of LeNet output variances for all testing MNIST images under varying noise levels. We compare the estimation of output variance through Monte-Carlo sampling of $10^4$ instances against the variance expressions in Theorem \ref{theo2} (Old) and Lemma \ref{lemma4} (New). We also report in the legend the averaged \emph{absolute relative difference} over the images.} \label{fig:variance_histograms_low}
\scalebox{0.9}{
\begin{tabular}{c}
    \includegraphics[width=\linewidth]{./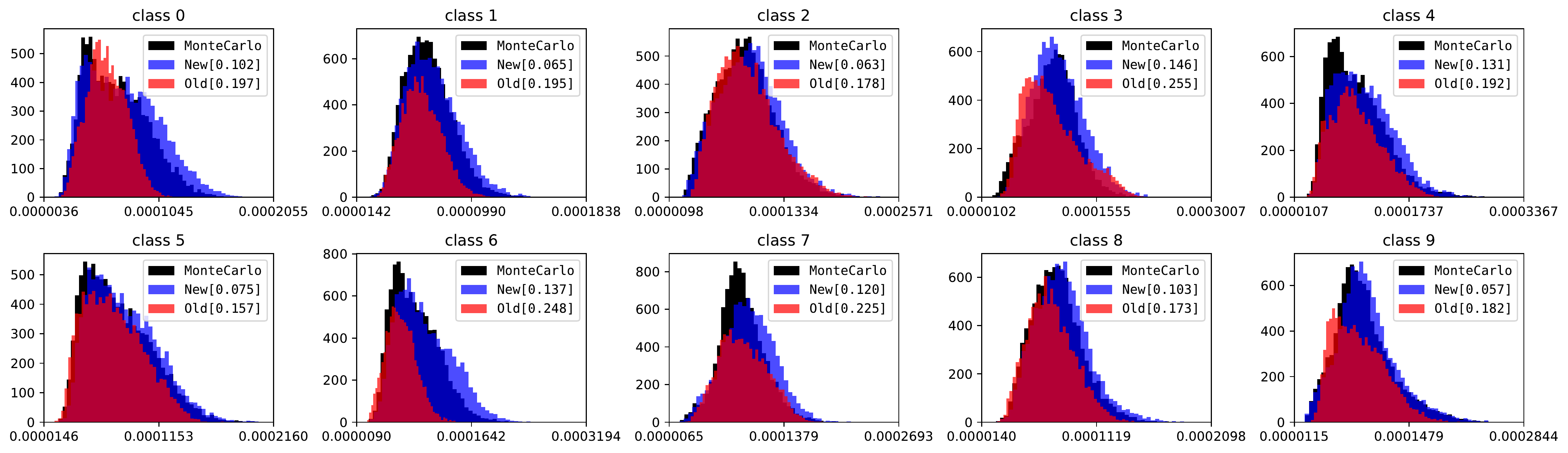}\\
    Variance histograms with input noise $\sigma = 0.001$\\\rule{0pt}{4ex}\\
    \includegraphics[width=\linewidth]{./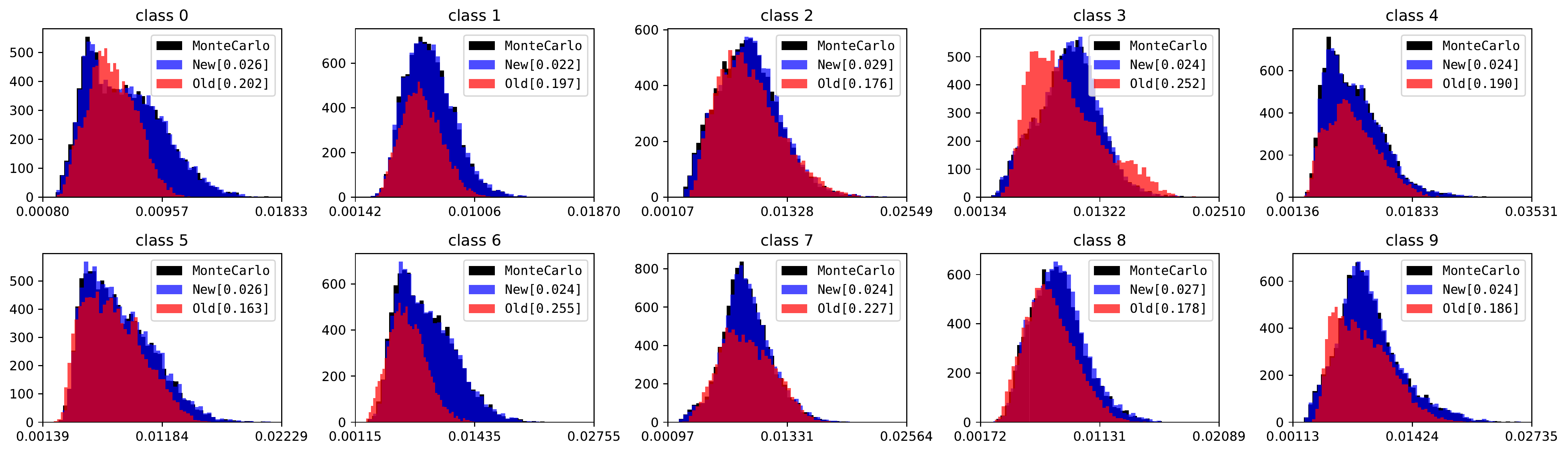}\\
    Variance histograms with input noise $\sigma = 0.01$\\\rule{0pt}{4ex}\\
    \includegraphics[width=\linewidth]{./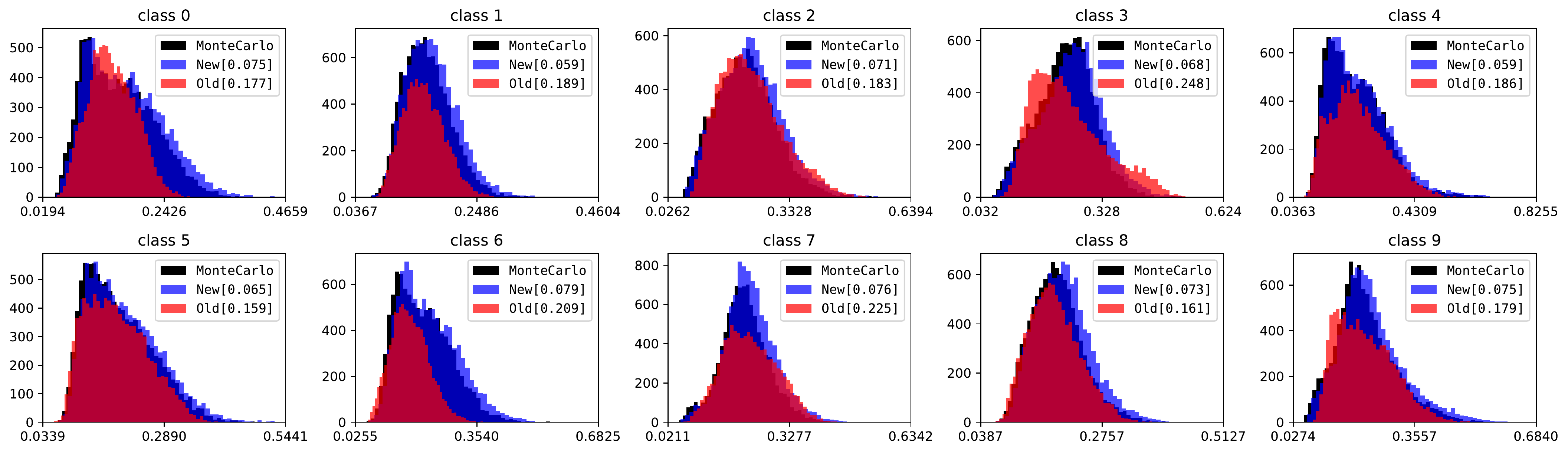}\\
    Variance histograms with input noise $\sigma = 0.05$\\\rule{0pt}{4ex}\\
    \includegraphics[width=\linewidth]{./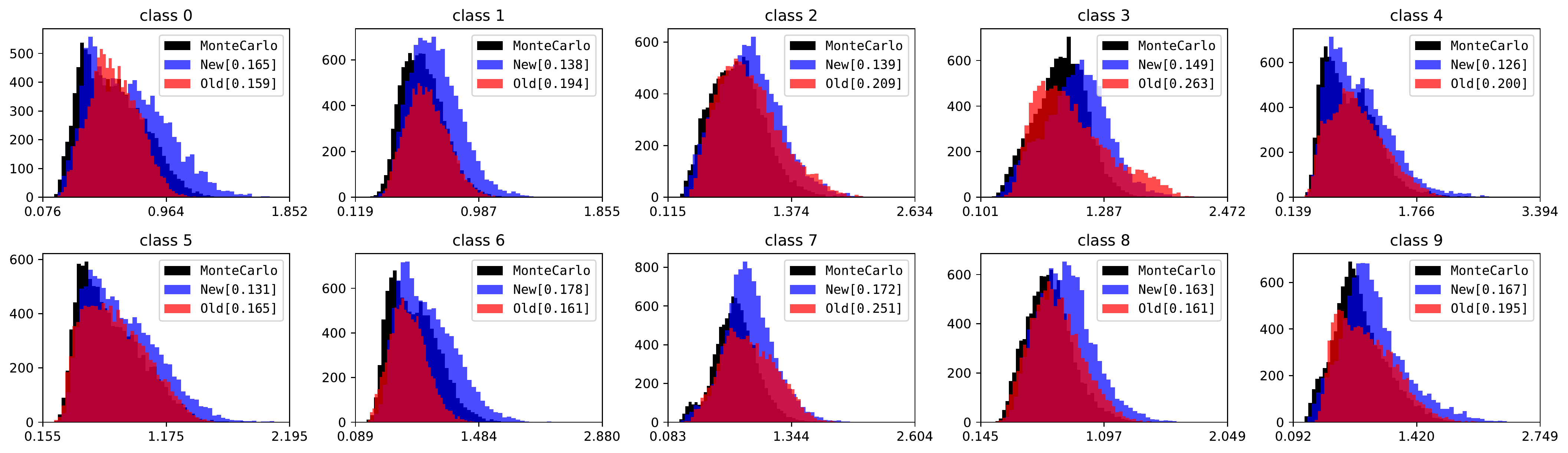}\\
    Variance histograms with input noise $\sigma = 0.1$\\
\end{tabular}
}
\label{histogram_plots_variance}
\end{table*}

\vspace{4pt}\noindent \textbf{Sensitivity to the Point of Linearization.} In all previous tightness validation experiments of the moment expressions, the point at which two-stage linearization is performed was restricted to be the input image, \ie $\mathbf{M}$. Clearly, this strategy suffers from limited scalability, since analyzing the output moment expressions of deep networks over a large dataset requires performing the expensive two-stage linearization for every image in the dataset. To circumvent this difficulty, we study the sensitivity of the tightness of the expressions under two-stage linearization around only a small set of input images from the dataset. That is to say, we choose a set of representative input images, at which the two-stage linearization parameters $\mathbf{A}, \mathbf{B}, \mathbf{c}_1$ and $\mathbf{c}_2$ are computed only once and offline for each input image. Now, to evaluate the network moments for an unseen input, we simply use the two-stage linearization parameters of the \emph{closest} linearization point to this input.

In this experiment, we study the tightness of our expressions under this more relaxed linearization strategy using LeNet on the MNIST testing set. We cluster the images in the testing dataset using $k$-means on the image intensity space with different values of $k$. We use the cluster centers as the linearization points. Table \ref{linearization_sensitivity} summarizes the tightness of the expressions for $k \in \{250, 500, 1000, 2500, 5000, 10000\}$ and compares them against a weak baseline, where the linearization point is set to be the \emph{farthest} image in each cluster from the cluster center with $k \in \{250, 500\}$. It is clear that the new variance expression based on Lemma \ref{lemma4} remains very close to the Monte Carlo estimate across different number of linearization points $k$, even when $k$ is as low as $250$, \ie only $2.5\%$ of the testing set. On the other hand, the analytic variance derived from Theorem \ref{theo2} is less accurate but stays within an acceptable range with $\mathbb{E}[\text{var}_{error}] \approx 0.25$. This indeed reaffirms that even upon truncating the infinite series in Equation \eqref{integration_identity} to only $5$ terms, the new variance expression is much tighter and more accurate under network linearization than the preliminary result of Theorem \ref{theo2} in \cite{bibi2018analytic}. As for the analytic mean, however, it is more sensitive to the point of linearization but even in the worst case, \ie $k = 250$ and $\sigma = 0.1$ for example, the average error $\mathbb{E}[\mathbb{E}_{error}]$ doesn't exceed $0.5$. When compared with the baseline experiments, \ie using the farthest point to the cluster center, the contrast becomes more obvious where the error is about $0.8$.

\begin{table}[t]
\centering
\caption{\textbf{Tightness results under varying number of linearization points $k$.} The table shows the tightness results using varying number of linearization points (\ie $k$-means cluster centers) averaged over all testing MNIST classes under different values of input noise $\sigma$. \newline*As for the Baseline experiment, the linearization points are set to be the farthest instances from the clusters' centers.}
\scalebox{0.95}{
\begin{tabular}{c|c|c|cc}
\toprule
$\sigma$ & $k$ & $\mathbb{E}[\mathbb{E}_{error}]$ & $\mathbb{E}[\text{var}_{error}]$ & $\mathbb{E}[\text{var}_{error}^{\text{new}}]$ \\
\midrule
0.001 & 250  &  $0.524 \pm 0.594$ &  $0.282 \pm 0.207$ &  $\mathbf{0.205 \pm 0.170}$ \\
&250\textsuperscript{*}&$0.846 \pm 0.688$&$\mathbf{0.293 \pm 0.220}$&$0.303 \pm 0.238$\\\cmidrule{2-5}
      & 500  &  $0.474 \pm 0.572$ &  $0.269 \pm 0.195$ &  $\mathbf{0.194 \pm 0.157}$ \\
&500\textsuperscript{*}&$0.766 \pm 0.684$&$0.277 \pm 0.206$&$\mathbf{0.276 \pm 0.221}$\\\cmidrule{2-5}
      & 1000 &  $0.414 \pm 0.541$ &  $0.260 \pm 0.191$ &  $\mathbf{0.185 \pm 0.152}$ \\
      & 2500 &  $0.315 \pm 0.481$ &  $0.241 \pm 0.175$ &  $\mathbf{0.165 \pm 0.134}$ \\
      & 5000 &  $0.194 \pm 0.387$ &  $0.224 \pm 0.163$ &  $\mathbf{0.140 \pm 0.115}$ \\
      & 10000 &  $0.003 \pm 0.027$ &  $0.200 \pm 0.142$ &  $\mathbf{0.100 \pm 0.078}$ \\
\midrule
0.010 & 250  &  $0.525 \pm 0.596$ &  $0.291 \pm 0.209$ &  $\mathbf{0.210 \pm 0.172}$ \\
&250\textsuperscript{*}&$0.859 \pm 0.694$&$\mathbf{0.284 \pm 0.210}$&$0.287 \pm 0.232$\\\cmidrule{2-5}
      & 500  &  $0.470 \pm 0.569$ &  $0.275 \pm 0.202$ &  $\mathbf{0.192 \pm 0.163}$ \\
&500\textsuperscript{*}&$0.763 \pm 0.678$&$0.275 \pm 0.203$&$\mathbf{0.259 \pm 0.217}$\\\cmidrule{2-5}
      & 1000 &  $0.411 \pm 0.537$ &  $0.263 \pm 0.190$ &  $\mathbf{0.174 \pm 0.145}$ \\
      & 2500 &  $0.313 \pm 0.478$ &  $0.243 \pm 0.176$ &  $\mathbf{0.143 \pm 0.125}$ \\
      & 5000 &  $0.198 \pm 0.388$ &  $0.227 \pm 0.163$ &  $\mathbf{0.101 \pm 0.101}$ \\
      & 10000 &  $0.023 \pm 0.103$ &  $0.203 \pm 0.143$ &  $\mathbf{0.025 \pm 0.020}$ \\
\midrule
0.050 & 250  &  $0.516 \pm 0.596$ &  $0.263 \pm 0.193$ &  $\mathbf{0.193 \pm 0.159}$ \\
&250\textsuperscript{*}&$0.848 \pm 0.689$&$\mathbf{0.285 \pm 0.214}$&$0.299 \pm 0.238$\\\cmidrule{2-5}
      & 500  &  $0.464 \pm 0.572$ &  $0.251 \pm 0.184$ &  $\mathbf{0.177 \pm 0.149}$ \\
&500\textsuperscript{*}&$0.761 \pm 0.678$&$\mathbf{0.268 \pm 0.202}$&$0.270 \pm 0.220$\\\cmidrule{2-5}
      & 1000 &  $0.405 \pm 0.538$ &  $0.242 \pm 0.181$ &  $\mathbf{0.163 \pm 0.142}$ \\
      & 2500 &  $0.312 \pm 0.475$ &  $0.226 \pm 0.166$ &  $\mathbf{0.138 \pm 0.120}$ \\
      & 5000 &  $0.216 \pm 0.388$ &  $0.211 \pm 0.155$ &  $\mathbf{0.110 \pm 0.098}$ \\
      & 10000 &  $0.109 \pm 0.235$ &  $0.192 \pm 0.139$ &  $\mathbf{0.070 \pm 0.049}$ \\
\midrule
0.100 & 250  &  $0.507 \pm 0.591$ &  $0.240 \pm 0.183$ &  $\mathbf{0.192 \pm 0.162}$ \\
&250\textsuperscript{*}&$0.823 \pm 0.688$&$\mathbf{0.298 \pm 0.220}$&$0.332 \pm 0.250$\\\cmidrule{2-5}
      & 500  &  $0.463 \pm 0.567$ &  $0.234 \pm 0.175$ &  $\mathbf{0.185 \pm 0.153}$ \\
&500\textsuperscript{*}&$0.751 \pm 0.671$&$\mathbf{0.290 \pm 0.219}$&$0.315 \pm 0.239$\\\cmidrule{2-5}
      & 1000 &  $0.412 \pm 0.535$ &  $0.226 \pm 0.169$ &  $\mathbf{0.177 \pm 0.147}$ \\
      & 2500 &  $0.337 \pm 0.477$ &  $0.218 \pm 0.163$ &  $\mathbf{0.165 \pm 0.132}$ \\
      & 5000 &  $0.268 \pm 0.406$ &  $0.208 \pm 0.156$ &  $\mathbf{0.158 \pm 0.116}$ \\
      & 10000 &  $0.205 \pm 0.313$ &  $0.196 \pm 0.146$ &  $\mathbf{0.153 \pm 0.082}$ \\
\bottomrule
\end{tabular}
}
\label{linearization_sensitivity}
\end{table}

\subsection{Noise Construction}
After establishing the tightness of our expressions compared to Monte Carlo estimates, we show more practical applications of these expressions, in which the output mean and variance expressions can be used to construct noise with certain properties. In particular, we are interested in showing that samples from a carefully crafted Gaussian distribution can act as an adversary. This goes against the common belief that Gaussian noise is too simple for such a task. In this section, we show insightful results on how to construct targeted and non-targeted Gaussian adversarial attacks. Moreover, we also show that such expressions can be leveraged to construct sparse and smooth Gaussian adversarial attacks that are perceptually feasible. It is to be noted here that this section is concerned about establishing the fact that Gaussian noise can act as an adversary while being perceptually feasible and not to particularly achieve state-of-the-art results on the task of adversarial attacks. The problem setup is as follows: given an image $\mathbf{M}$, whose predicted class is $i$, the task is to add noise $\mathbf{x} \sim \mathcal{N}(\mu_x,\Sigma_x)$ to $\mathbf{M}$ such that the \textit{expected} prediction score of the network of $\mathbf{M} + \mathbf{x}$ is $j \neq i$. If such noise exists, we say the network is fooled in expectation. To keep the optimization and the number of variables manageable, we only consider the case of isotropic Gaussian distributions, \ie $\Sigma_x = \sigma^2 \mathbf{I}_n$. We define $\mathcal{E}_i^\mathbf{M}(\mu_x, \sigma^2) \equiv \mathbb{E}[\mathbf{g}_i(\mathbf{M}+\mathbf{x}_{(\mu_x,\sigma^2\mathbf{I}_n)})]$ to avoid text clutter. In the following experiments, the two-stage linearization is performed around $\mathbf{M}$ at $l=3$ for LeNet and $l=7$ for AlexNet.

\vspace{4pt}\noindent \textbf{Targeted Attacks.~} On the MNIST dataset, we specify a target class $j$ and we construct a noise that can fool LeNet in expectation by solving the following optimization:
\begin{equation}
    \begin{aligned}
        \underset{\mu_x,\sigma}{\arg\min}& \left (\max_{k \neq j}\left(\mathcal{E}^\mathbf{M}_k(\mu_x, \sigma^2 )\right) - \mathcal{E}^\mathbf{M}_j(\mu_x, \sigma^2 )\right) \\
        \text{s.t.~~}& 0 < \sigma^2 \leq 2, ~~~ -\beta \mathbf{1}_n \leq \mu_x \leq \beta \mathbf{1}_n. 
    \end{aligned}
    \label{opt1}
\end{equation}

\noindent Note that for any pair $(\mu_x,\sigma)$ for which the previous objective is negative, the largest expected prediction among all classes occurs at the target class $j$. In this experiment, we set $\beta = 30$ and solve problem \eqref{opt1} with an interior-point solver. Note that the range of pixel values of MNIST images is $[-127.5,127.5]$. Figure \ref{trageted_noise} shows examples of noisy versions of an image from class $9$ that fool LeNet in expectation with multiple target classes (\ie $j\in\{2,3,4,7,8\}$). Not every target class is easily targeted with small $\beta$ because of the distance in their prediction scores. We verify that the constructed noise actually fools the network by sampling 10 samples from the learned distribution, passing each noisy input through LeNet, and verifying that at least $90\%$ of the predicted class flips are from $9$ to the target class $j$.

\begin{figure}[t]
\begin{center}
	 	\includegraphics[width = 0.48\textwidth]{./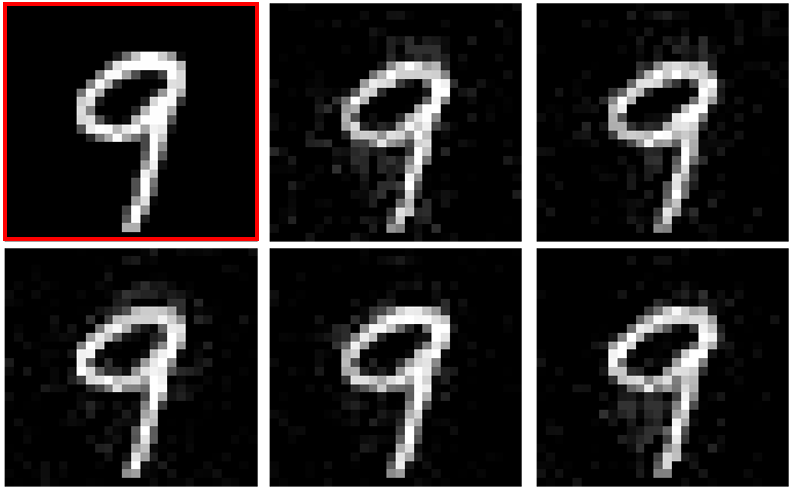}
	 	\caption{\textbf{Targeted attacks.} The figure shows noisy images that fool LeNet. The images from top-left to bottom-right are the original image from MNIST and the noisy versions classified as $2$, $3$, $4$, $7$, and $8$, respectively.}
\label{trageted_noise}
\end{center}
\end{figure}

\vspace{4pt}\noindent \textbf{Non-Targeted Attacks with $\alpha\%$-Pixel Support.~} Inspired by the findings of some recent work \cite{su2017one}, we demonstrate that we can construct additive noise that only corrupts $\alpha\%$ of the pixels in an input image, but still changes the class prediction. Here, we use LeNet on MNIST and AlexNet on ImageNet. In this case, we do not specify the target class $j$ but rather we optimize for the prediction scores of the correct class $i$ to be less than the maximum prediction score. The underlying optimization is formulated as follows:
\begin{equation}
    \begin{aligned}
        \underset{\mu_x^\alpha, \sigma}{\arg\min}& \left (\mathcal{E}^\mathbf{M}_i(\mu_x^\alpha, \sigma^2 ) - \max_{k \neq i}\left(\mathcal{E}^\mathbf{M}_k(\mu_x^\alpha, \sigma^2)\right) \right)\\
        \text{s.t.~~}& 0 < \sigma^2 \leq 2, ~~~ -\beta \mathbf{1}_{\alpha n} \leq \mu_x^\alpha \leq \beta \mathbf{1}_{\alpha n}.
    \end{aligned}
    \label{opt2}
\end{equation}

\noindent The optimization variable $\mu_x^\alpha$ indicates the set of sparse pixels ($\alpha\%$ of the total number of pixels) in $\mu_x$ that will be corrupted, while the rest of pixels are set to $0$. The locations of the corrupted pixels are randomly chosen and fixed before solving the optimization. Two experiments are conducted on few images, one on MNIST and the other on ImageNet. Figures \ref{mnist_flip} and \ref{imagenet_flip} show examples of noisy images constructed by solving Equation \eqref{opt2} with $\alpha=4\%$ to fool LeNet on MNIST and  $\alpha=2\%$  to fool AlexNet on ImageNet. Since there are much fewer pixels to flip the prediction of the network and similar to the single pixel attack in \cite{su2017one}, we increase the permissible range of mean noise by setting $\beta = 550$ for MNIST and $\beta = 75$ for ImageNet. Note, in these experiments, we assume that $\mathbf{M} \in [0,255]^n$.

\begin{figure}[t]
 	\includegraphics[width=0.48\textwidth]{./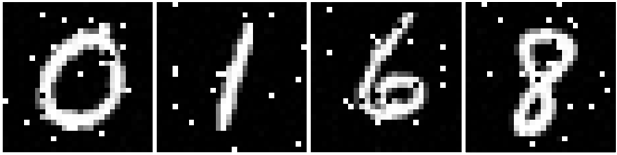}
 	\caption{\textbf{Non-targeted attacks with $\alpha \%$-pixel support on MNIST.~} The noisy digit 9 as predicted by LeNet as  $2$, $4$, $2$, and $2$ (from left to right), after adding noise generated by Equation \eqref{opt2}. The first image marked in red is the noise free sample.}
\label{mnist_flip}
\end{figure}

\begin{figure}[t]
 	\includegraphics[width=0.48\textwidth]{./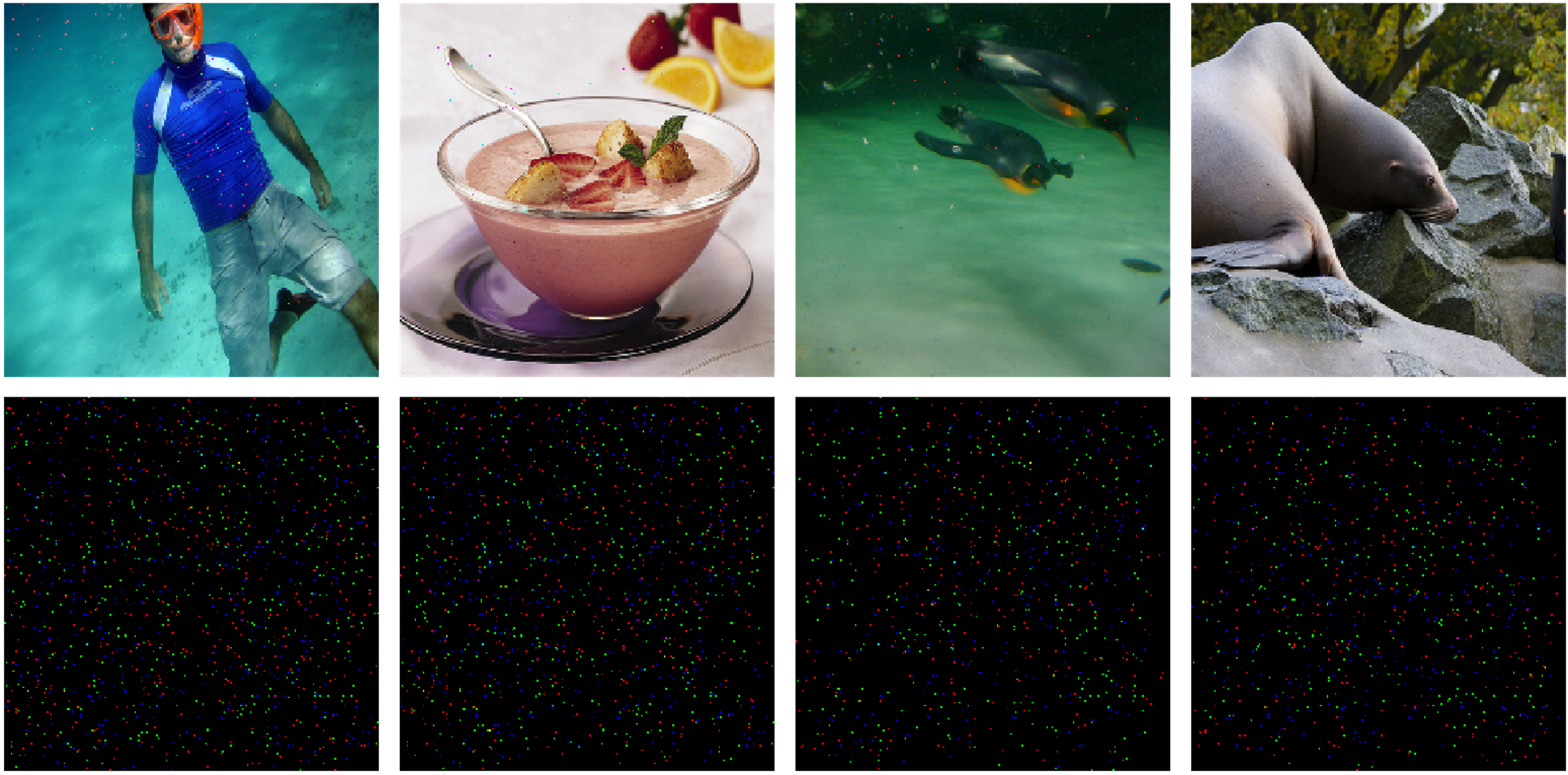}
 	\caption{\textbf{Non-targeted Attacks with $\alpha \%$-pixel support on ImageNet.~} The figure in the top row the noisy images that fool AlexNet as generated by solving Equation \eqref{opt2}. The second row shows the generated noise with $\alpha = 2\%$ of the total pixels in the image.}
\label{imagenet_flip}
\end{figure}

\vspace{4pt}\noindent \textbf{Non-Targeted Attacks with Sparse and Smooth Pixel Support.~} In the previous section, we optimized over a randomly selected support, $\alpha \%$, which may not hold any structure nor is perceptually feasible. To that end, and towards constructing more meaningful structured noise that is more perceptually feasible, we instead optimize over the complete pixel support while enforcing both sparsity and smoothness, in this subsection. In other words, we are interested in designing a Gaussian distribution with, for ease, an identity input covariance ($\Sigma_x = \mathbf{I}_n$) but with mean $\mu_x$ that is both sparse and smooth, such that samples from this distribution cause the network to incorrectly predict the class of the input. The optimization of interest can be formulated as follows:
\begin{equation}
    \label{opt:sparse_plus_smooth}
    \begin{aligned}
     \underset{\mu_x}{\arg\min} & ~~~~~ \|\mu_x\|_1 +  \Omega(\nabla_x \mu_x, \nabla_y \mu_x) ~~ \\
     \text{s.t.}~~~ & \mathcal{E}^\mathbf{M}_i( \mu_x, 1) - \max_{k \neq i}\left(\mathcal{E}^\mathbf{M}_k(\mu_x,1)\right) \leq - \gamma,
    \end{aligned}
\end{equation}
where $\gamma>0$ is some constant controlling the misclassification confidence, $\Omega(x,y) = \sum_i \sqrt{x_i^2 + y_i^2}$ and $\nabla_x, \nabla_y \in \mathbb{R}^{n \times n}$ are gradient operators in the x and y directions, respectively. The $\ell_1$ norm in the objective is commonly used to encourage sparsity, while $\Omega(\nabla_x \mu_x, \nabla_y \mu_x)$ is well-known to be the total variation regularizer, which is commonly used in various low-level image processing tasks (\eg denoising, deblurring, etc.), to encourage smoothness via sparse spatial gradients. We set $\gamma=0$ in all experiments unless stated otherwise. We conduct two sets of experiments one with LeNet on the MNIST dataset and another set with a variant of AlexNet on the Facial Emotion Recognition dataset from Kaggle. Note that the Emotion dataset consists of $35887$ frontal images of faces depicting 7 emotions, where the best performing network on Kaggle achieves a classification accuracy of 68\%. Since the Kaggle models are not publicly available, we choose to train a variant of AlexNet to account for the difference in the input resolution between ImageNet and Emotion dataset images. This AlexNet variant achieves a comparable test accuracy of 64\%. We solve \eqref{opt:sparse_plus_smooth} on both datasets (LeNet on MNIST and AlexNet on the Emotion dataset) with $l=3$, where the points of linearization are random images from the test set that are classified correctly by the network. Figure \ref{fig:non_l1tv} shows few qualitative adversarial examples from the MNIST dataset. Since the solution to \eqref{opt:sparse_plus_smooth}, $\mu_x$, can be both positive and negative, we visualize both quantities in the third and forth columns in Figure \ref{fig:non_l1tv}. One can observe how the sparse noise is both smooth and structured due to the proposed total variation regularizer. For instance, the noise that corresponds to misclassifying the digit 3 to 9 in Figure \ref{fig:non_l1tv} is indeed located at the top left part of the digit perceptually altering digit 3 into looking more like digit 9. This confirms that the proposed optimization \eqref{opt:sparse_plus_smooth} indeed results in a Gaussian distribution, where noise sampled from that distribution is more perceptually feasible for the task of network misclassification. To perform targeted attacks (with target class $j$), one can simply replace the constraint in \eqref{opt:sparse_plus_smooth} with:
\begin{equation}
    \label{eq:targeted_constraint}
    \begin{aligned}
    \max_{k \neq j}\left(\mathcal{E}^\mathbf{M}_k(\beta\mu_x, 1)\right) - \mathcal{E}^\mathbf{M}_j(\beta\mu_x, 1) \leq - \gamma.
    \end{aligned}
\end{equation}

\begin{figure*}[t]
    \centering
    \includegraphics{./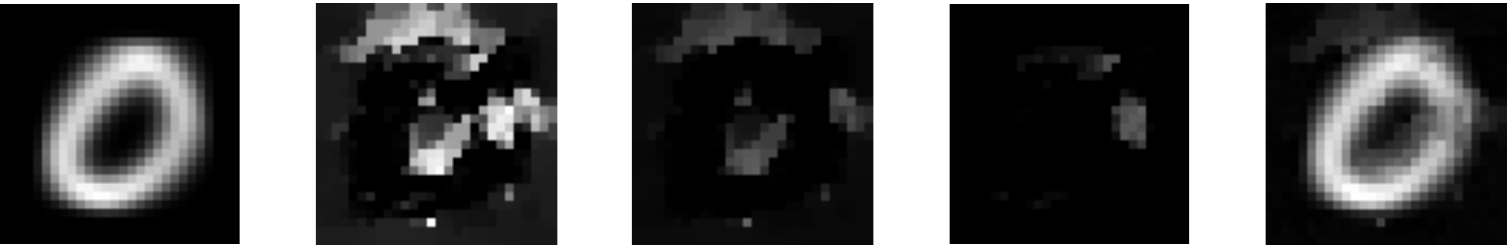}
    \includegraphics{./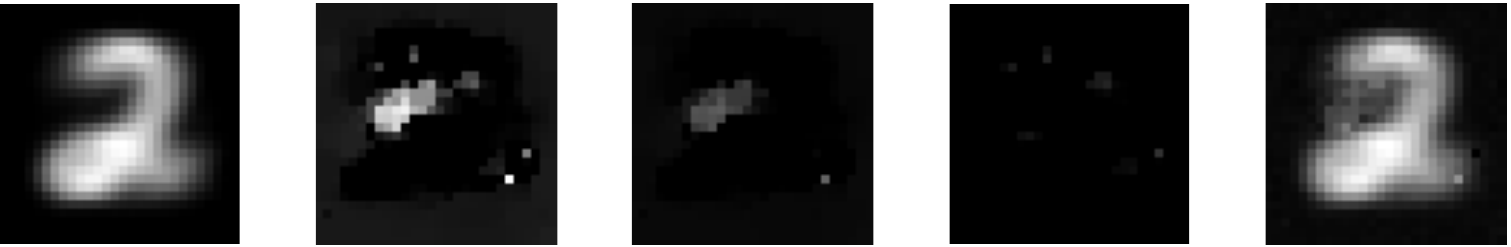}
    \includegraphics{./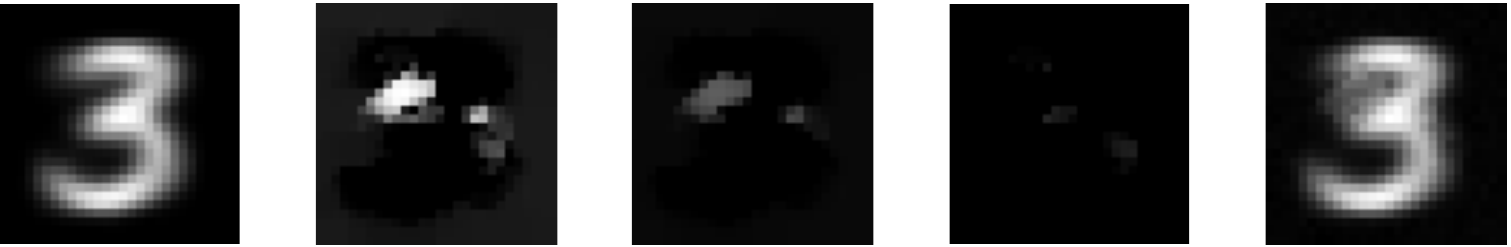}
    \includegraphics{./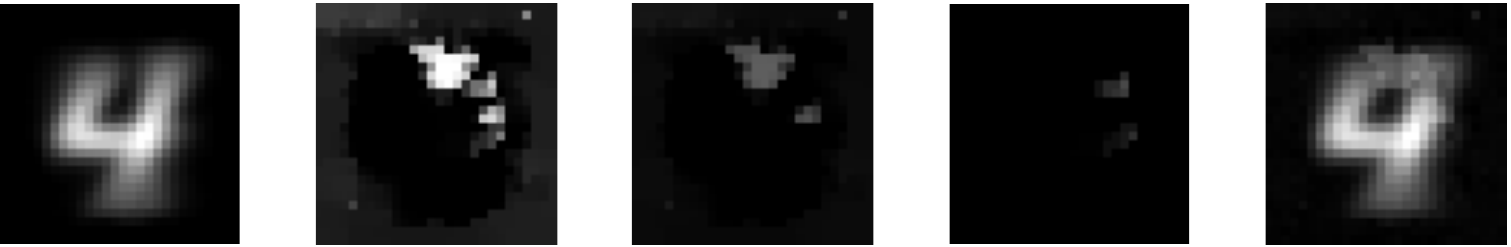}
    \includegraphics{./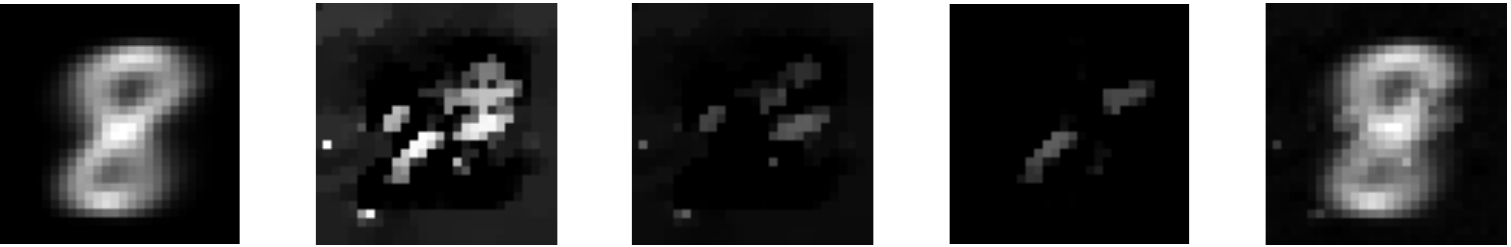}
    \caption{\textbf{Non-targeted attacks with sparse and smooth pixel support.~} We show that the solution $\mu_x$ to Problem \eqref{opt:sparse_plus_smooth} is indeed both sparse and smooth. The columns from left to right are the mean image for a given class, the absolute value of $\mu_x$, the decomposition into positive and negative values, respectively and the last column shows the first column added with noise sampled from $\mathcal{N}(\mu_x,\mathbf{I})$. After the addition of the structured noise, the network classified the noisy images as $6$, $8$, $9$, $9$, and $9$, respectively.}
    \label{fig:non_l1tv}
\end{figure*}

\begin{figure*}
    \centering
    \includegraphics{./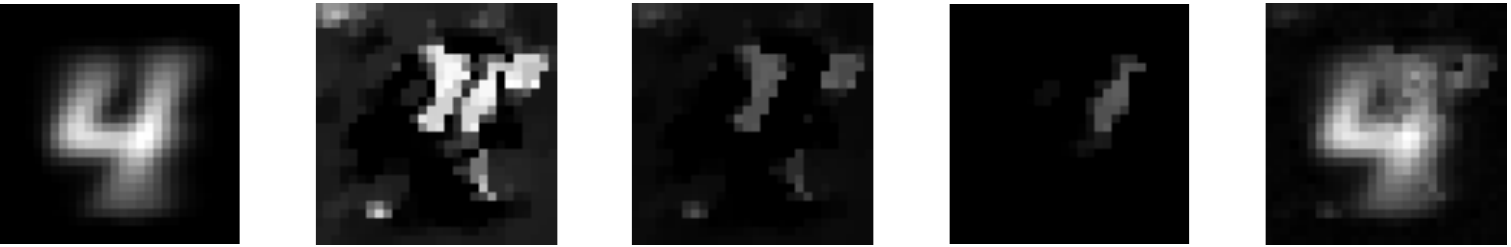}
    \includegraphics{./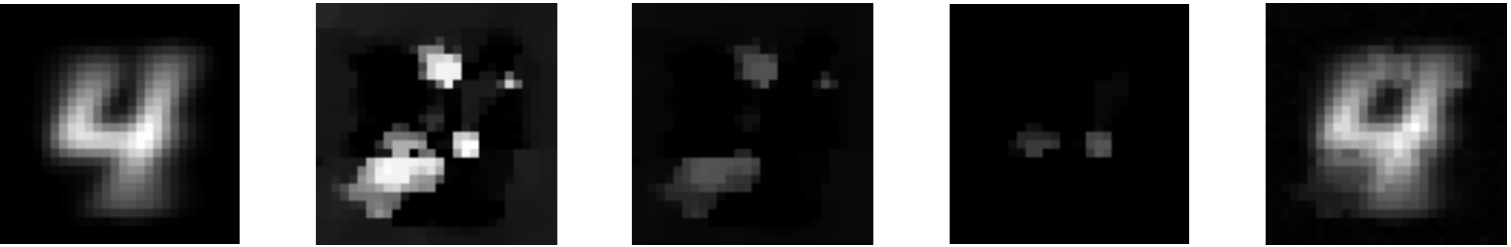}
    \includegraphics{./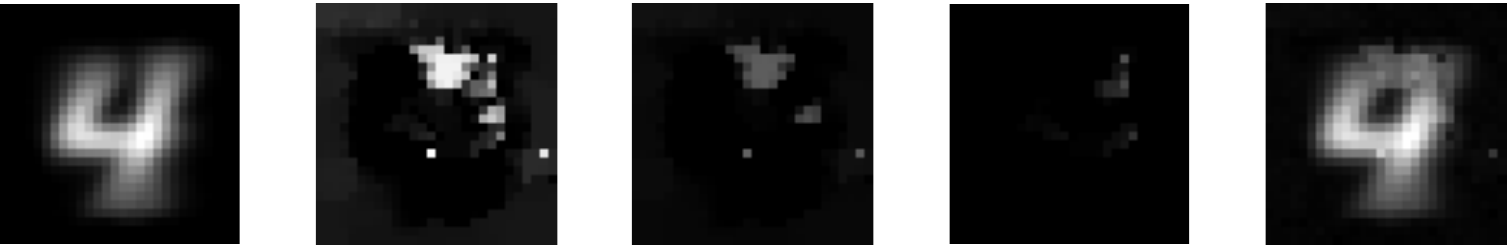}
    \caption{\textbf{Targeted attacks with sparse and smooth pixel support.~} The columns from left to right are the mean image for a given class, the absolute value of $\mu_x$, the decomposition into positive and negative values, respectively and the last column shows the first column added with noise sampled from $\mathcal{N}(\mu_x,\mathbf{I})$. After the addition of the structured noise, the network classified the noisy images as $5$, $8$, and $9$, respectively.}
    \label{fig:tar_l1tv}
\end{figure*}

\noindent Note that obtaining a feasible solution to problem \eqref{opt:sparse_plus_smooth} with the constraint replaced by \eqref{eq:targeted_constraint} constructs a Gaussian distribution, where samples from this distribution result in an expected prediction at class $j$ be higher than all other classes. We show in Figure \ref{fig:tar_l1tv} examples for targeted sparse smooth attacks that fool the network into classifying the digit 4 to the targeted classes 5, 8 and 9. In particular, it is interesting to observe that when the target class is digit 5 (first row of Figure \ref{fig:tar_l1tv}), the Gaussian sampled noise has negative and positive components to the top right and top parts of digit 4, respectively, which tend to be more perceptually feasible than the unconstrained noise case. A similar observation can be made for the cases when the target class is digit 8 or 9. For a more detailed quantitative experiment on the effectiveness of the proposed optimization in constructing targeted attacks, we conduct experiments on 70 randomly selected images from the Emotion dataset (10 per class). In particular, we construct targeted attacks from every class in the Emotion dataset to every other class. Table \ref{table:emotion_l1tv} reports the confusion matrix of the misclassification rate for every source-target pair. Note that with the proposed objective in \eqref{opt:sparse_plus_smooth} with the constraint replaced by \eqref{eq:targeted_constraint}, high misclassification rates to the targeted classes are effectively achieved with perceptually feasible attacks. This is consistent in all source-target pairs, as summarized in Table \ref{table:emotion_l1tv}. We show several qualitative results of these attacks in the \textbf{Appendix}.

\begin{table}[t]
\caption{\textbf{Targeted sparse Gaussian attacks.} The table shows the misclassification rate from every class to every other class by solving \eqref{opt:sparse_plus_smooth} with the targeted attack constraint in \eqref{eq:targeted_constraint} on 70 randomly selected images from the Emotion dataset (10 per class).}
\begin{center}
\begin{tabular}{c||c|c|c|c|c|c|c|c}
\toprule
\multicolumn{1}{c}{} & \multicolumn{1}{c}{} & \multicolumn{7}{c}{Target Class} \\
\midrule
\midrule
 & \multicolumn{1}{c|}{} & \textbf{0} & \textbf{1} & \textbf{2} & \textbf{3} & \textbf{4} & \textbf{5} & \textbf{6} \\ \cline{2-9}
\parbox[t]{2mm}{\multirow{7}{*}{\rotatebox[origin=c]{90}{Source Class}}}
 & \textbf{0} & - & 1 & 0.7 & 1 & 0.7 & 0.9 & 1 \\ \cline{2-9} 
 & \textbf{1} & 0.8 & - & 1 & 1 & 0.7 & 0.9 & 1 \\ \cline{2-9} 
 & \textbf{2} & 1 & 1 & - & 1 & 0.9 & 1 & 1 \\ \cline{2-9} 
 & \textbf{3} & 0.9 & 1 & 0.7 & - & 0.7 & 0.7 & 0.9 \\ \cline{2-9} 
 & \textbf{4} & 1 & 1 & 1 & 1 & - & 1 & 0.9 \\ \cline{2-9} 
 & \textbf{5} & 0.9 & 1 & 0.9 & 1 & 0.6 & - & 1 \\ \cline{2-9} 
 & \textbf{6} & 0.9 & 1 & 0.8 & 1 & 1 & 0.9 & - \\ 
 \bottomrule
\end{tabular}
\end{center}
\label{table:emotion_l1tv}
\end{table}

\vspace{4pt}\noindent \textbf{Misclassification Rate with Varying $\gamma$.~} Here, we explore the effect of varying the parameter $\gamma$, which controls the misclassification rate confidence of the Gaussian sampled attacks. Larger $\gamma$ should result in samples from the designed Gaussian with a larger expected prediction output for the target class $j$ than all other classes. Since the highest misclassification, as per Table \ref{table:emotion_l1tv}, occurred when the source-target pair is classes 5 and 4, respectively, we solve Problem \eqref{opt:sparse_plus_smooth} with class 4 as the targeted class in constraint \eqref{eq:targeted_constraint} for images from class 5. We randomly select 30 images from class 5 while varying $|\gamma|$. In Figure \ref{fig:varying_alpha}, we plot the average misclassification rate over the 30 samples, where we show on the x-axis the variation in the expected separation in network predictions, \ie $\max_{k \neq j}\left(\mathcal{E}^\mathbf{M}_k(\beta\mu_x, \mathbf{I})\right) - \mathcal{E}^\mathbf{M}_j(\beta\mu_x, \mathbf{I})$. As anticipated, we observe that the misclassification rate increases with the increase in separation between the two expected predictions. It is essential to note that, while larger $\gamma$ indeed results in a higher misclassification rate, it comes at the expense of solving a harder optimization problem.  

\vspace{4pt}\noindent \textbf{Sparsity with Varying $\gamma$.~} In addition to the role of $\gamma$ in increasing the misclassification rate, we study its impact on the sparsity of the sampled noise. This is essential towards understanding whether the attacks are resulting in a higher misclassification rates with larger $\gamma$ due to an increase in the noise support, \ie lower sparsity. Similar to the previous experiment, we plot the sparsity in the noise as a function of the expected separation in network predictions, \ie $\max_{k \neq j}\left(\mathcal{E}^\mathbf{M}_k(\beta\mu_x, \mathbf{I})\right) - \mathcal{E}^\mathbf{M}_j(\beta\mu_x, \mathbf{I})$. To measure sparsity, all values in the constructed noise that are $\leq 10^{-3}$ are considered to be zero. As shown in Figure \ref{fig:varying_alpha}, the sparsity starts at around $90\%$ and decreases only marginally. This indicates that there exist powerful Gaussian attacks that result in high misclassification rates, due to the large separation in the expected predictions, which are also effectively sparse.

\begin{figure}[t]
\centering
\includegraphics[width=0.4\textwidth]{./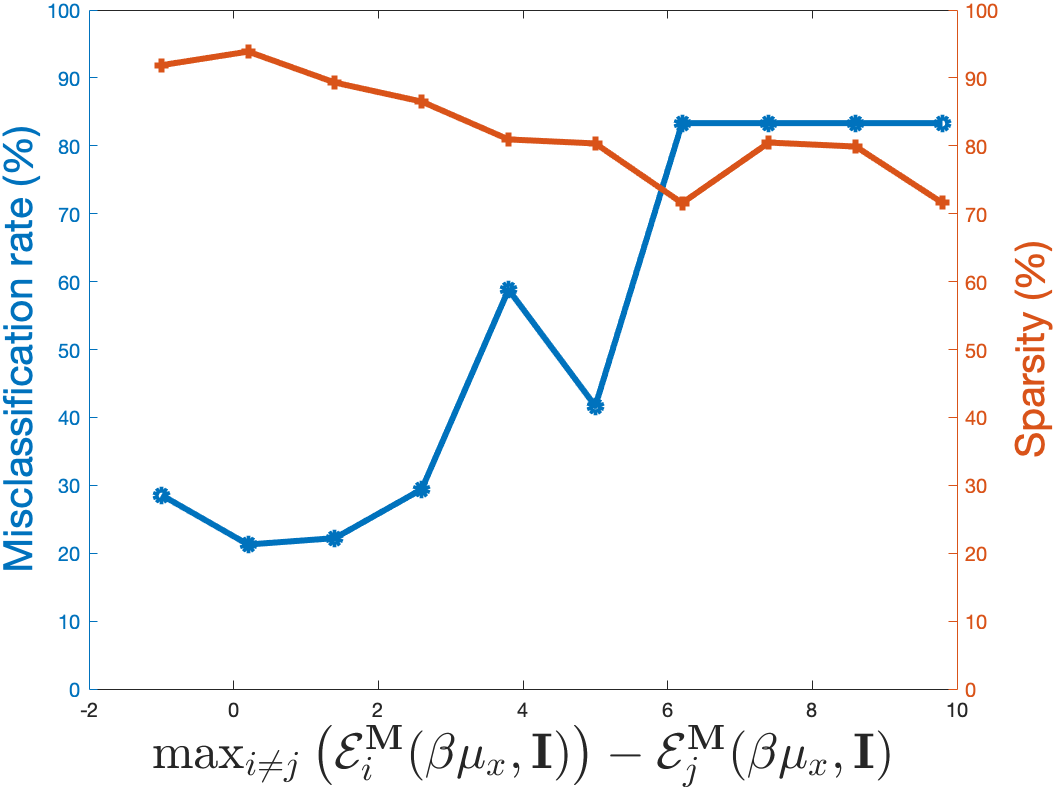}
\caption{\textbf{Misclassification and sparsity rates under varying separation of expected predictions.~} With the increases in the separation between the expected output predictions of the correct and incorrect class increases, the misclassification rate increases while maintaining the same sparsity in the Gaussian attacks.}
\label{fig:varying_alpha}
\end{figure}

\section{Conclusion}
In this paper, we extend and generalize our preliminary results in \cite{bibi2018analytic}, where we derive the exact second moment expression for a small PL neural network in the form (Affine, ReLU, Affine) under no assumptions on the input mean of the Gaussian distribution. We conduct extensive experiments demonstrating the role of the layer, $l$, at which two-stage linearization is performed, and over strategies in selecting the point of network linearization. Moreover, we demonstrate that such expressions can be used to construct targeted and non-targeted Gaussian adversarial attacks that are both sparse and smooth (more perceptually feasible).


%


\appendix

\section*{Qualitative Results on the Emotion Dataset}
The classes in the Emotion dataset are ``Anger'', ``Disgust'', ``Fear'', ``Happiness'', ``Sadness'', ``Surprise'' and ``Neutral''. In a similar fashion to the MNIST experiments in Figure \ref{fig:non_l1tv}, we visualize in Figure \ref{fig:non_l1tv_emotion} the structured noise upon solving the total variation regularized Problem \eqref{opt:sparse_plus_smooth}. Despite that encoding high-level semantic information, such as emotions, in pixel intensity space is generally very difficult, some of the presented qualitative results in Figure \ref{fig:non_l1tv_emotion} are perceptually aligned. For instance, in the last row, the Gaussian noise is structured around the eyebrows resulting in misclassifying ``Neutral'' as ``Sadness''. A similar observation is to be noted for the first row where the ``Happiness'' image is misclassified as ``Sadness'' where all the noise is structured around the eyebrows and the chin.

\begin{figure*}[!ht]
    \centering
    \includegraphics[scale=0.39]{./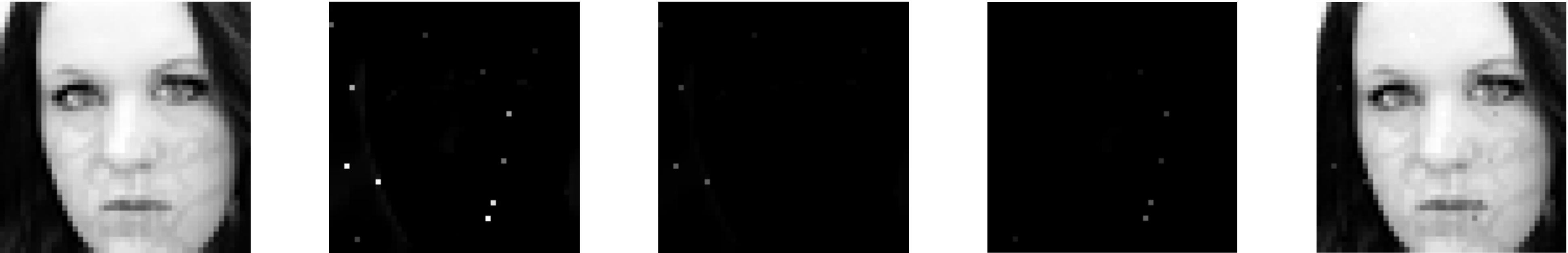}
    \includegraphics[scale=0.39]{./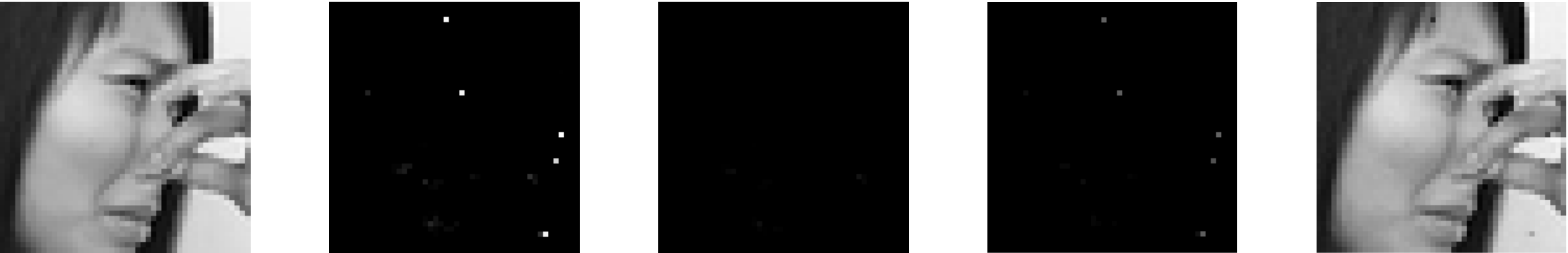}
    \includegraphics[scale=0.39]{./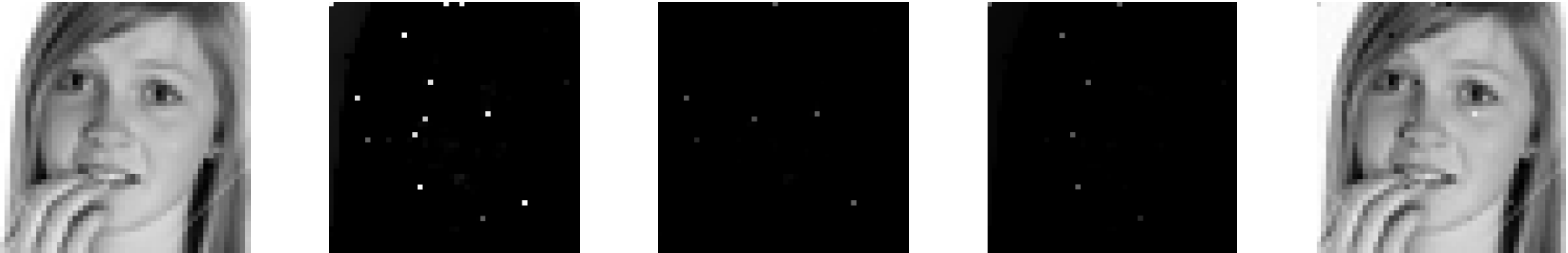}
    \includegraphics[scale=0.39]{./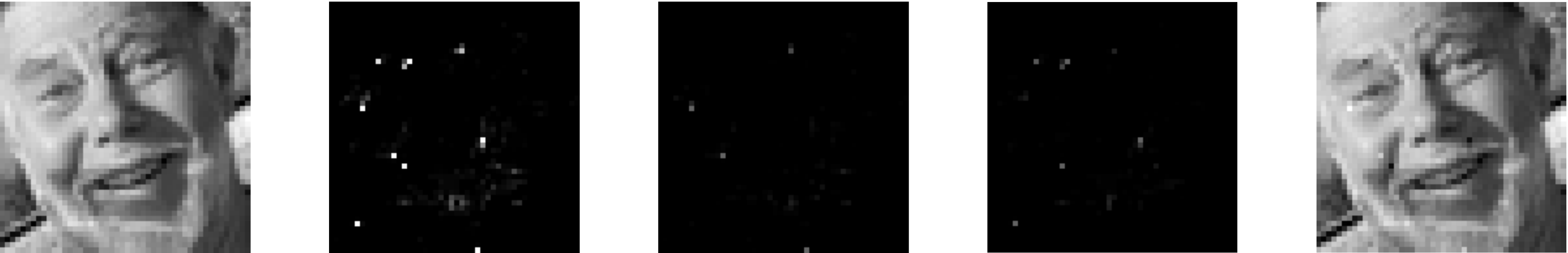}
    \includegraphics[scale=0.39]{./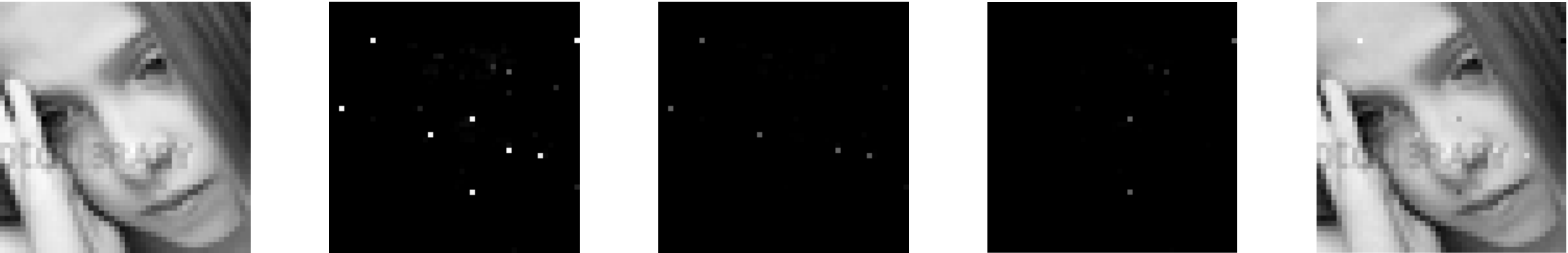}
    \includegraphics[scale=0.39]{./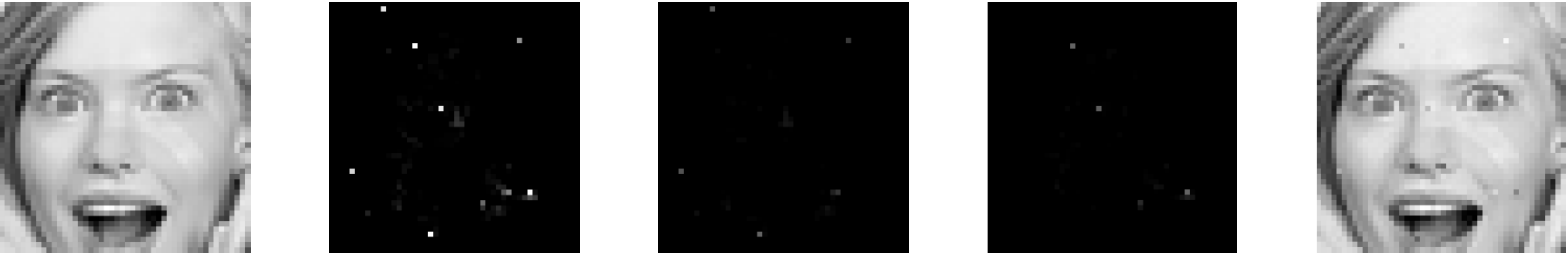}
    \includegraphics[scale=0.39]{./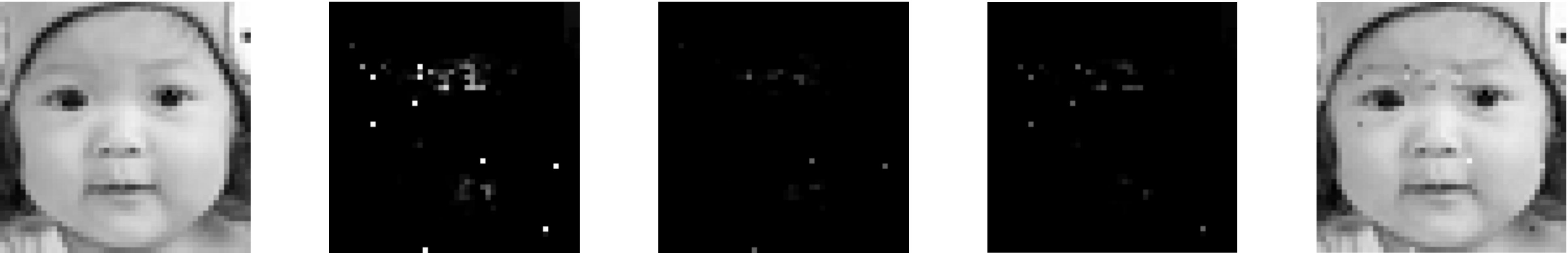}
    \caption{\textbf{Non-targeted attacks with sparse and smooth pixel support.~} We show that the solution $\mu_x$ to Problem \eqref{opt:sparse_plus_smooth} is indeed both sparse and smooth and in some cases perceptually aligned on the Emotion dataset. The columns from left to right are an image from the Emotion dataset to be optimized over from a given class, the absolute value of $\mu_x$, the decomposition into positive and negative values, respectively and the last column shows the first column added with noise sampled from $\mathcal{N}(\mu_x,\mathbf{I})$. After the addition of the structured noise, the network misclassified the images as as Neutral, Fear, Neutral, Sadness, Neutral, Fear and Sadness from Anger, Disgust, Fear, Happiness, Sadness, Surprise, and Neutral, respectively.}
    \label{fig:non_l1tv_emotion}
\end{figure*}

\section*{Proof of Theorem \ref{theo1}}
\begin{theorem_supp}
For any function in the form of $\mathbf{g}(\mathbf{x})$ where $\mathbf{x} \sim \mathcal{N} \left(\mu_x, \Sigma_x \right)$, we have:
\begin{equation}
\begin{aligned}
\mathbb{E}[\mathbf{g}_i(\mathbf{x})] =  \sum_{v=1}^p &\mathbf{B}(i,v)\left(  \frac{1}{2}  \bar{\mu}_{v}  -  \frac{1}{2}  \bar{\mu}_{v}  \text{erf}\left(\frac{-\bar{\mu}_{v}}{\sqrt{2} \bar{\sigma}_{v}}\right) \right. \\
& \left. +  \frac{1}{\sqrt{2 \pi}}\bar{\sigma}_{v} \exp\left(-\frac{\bar{\mu}_{v}^2}{2 \bar{\sigma}_{v}^2} \right) \right) + \mathbf{c}_2(i) 
\notag
\end{aligned}
\end{equation}
where $\bar{\mu}_{v} = \left(\mathbf{A} \mu_x + \mathbf{c}_1\right)(v)$,  $\bar{\Sigma} = \mathbf{A} \Sigma_x \mathbf{A}^\top$, $\bar{\sigma}_{v}^2 = \bar{\Sigma}(v,v)$ and $\text{erf}\left(x\right) = \frac{2}{\sqrt{\pi}} \int_0^x \text{e}^{-t^2}dt$ is the error function.
\end{theorem_supp}

\begin{proof} Based on Remark (\ref{remark1}) and noting that $\left(\mathbf{A}\mathbf{x} + \mathbf{c}_1\right) \sim \mathcal{N} \left(\bar{\mu},\bar{\Sigma}\right)$, we have:
\begin{equation}
\begin{aligned}
&\tilde{\mu}(i) =  \int_{0}^{\infty} z \frac{1}{\sqrt{2 \pi } \bar{\sigma}_i} \exp\left(-\frac{(z - \bar{\mu}_i)^2}{2\bar{\sigma}_i^2} \right)dz  \\
& =  \frac{1}{2}  \bar{\mu}_i   - \frac{1}{2} \bar{\mu}_i \text{erf}\left(\frac{-\bar{\mu}_i}{\sqrt{2} \bar{\sigma}_i} \right)+ \frac{1}{\sqrt{2 \pi}}\bar{\sigma}_i \exp\left(-\frac{\bar{\mu}_i^2}{2 \bar{\sigma}_i^2} \right)
\notag
\end{aligned}
\end{equation}
Thus, $\mathbb{E}[\mathbf{g}_i(\mathbf{x})] = \sum_{v=1}^p\mathbf{B}(i,v) \tilde{\mu}(v)+ \mathbf{c}_2(i)$.
\end{proof}


\section*{Proof of Lemma \ref{lemma1}}
\begin{lemma_supp}
The PDF of $q^2(x) = \max^2(x,0) : \mathbb{R} \rightarrow \mathbb{R}$ where $x \sim \mathcal{N}\left(0,\sigma_x^2 \right)$ is :
\begin{equation}
\begin{aligned}
f_{q^2}(x) =  \frac{1}{2} \delta(x) + \frac{1}{2\sqrt{x}} f_x(\sqrt{x}) u(\sqrt{x})
\notag
\end{aligned}
\end{equation}
and its first moment is $\mathbb{E}[q^2(x)] = \frac{\sigma_x^2}{2}$.
\end{lemma_supp}

\begin{proof}
By the cumulative distribution function (CDF):
\begin{align} 
\label{cdfunc}
F_{q^2}(c) &= \mathbb{P}(\text{max}^2(x,0) \leq c)
= \frac{1}{2}\delta(c) + F_x(\sqrt{c}) u\left(\sqrt{c}\right)
\end{align}
$F_{q^2}$ and $F_x$ are the CDFs of $q^2$ and $x$. Differentiating the smooth part of (\ref{cdfunc}) with respect to $c$ completes the proof.
\end{proof}


\section*{Proof of Lemma \ref{price_exten}}
\begin{lemma_supp}
Let $\mathbf{x} \in \mathbb{R}^{n} \sim \mathcal{N}(\mu_x,\Sigma_x)$ for any even p, where $\sigma_{ij} = ~~ \Sigma_x(i,j) ~\forall i\neq j$. Under mild assumptions on the nonlinear map $\Psi :
\mathbb{R}^{n} \rightarrow \mathbb{R}$, we have $\frac{\partial^{\frac{p}{2}} \mathbb{E}[\Psi(\mathbf{x})]}{\prod_{\forall \text{odd} i}\partial \sigma_{ii+1}}$ $= \mathbb{E} [\frac{\partial^p \Psi(\mathbf{x})}{\partial x_1 \dots \partial
x_p} ]$.
\end{lemma_supp}

\begin{proof}
The proof is very similar to the one found in \cite{papoulis1965probability} but with $n$ variables and by taking gradients with respect to consecutive covariances $\Sigma(i,j)$. For ease of notation, we will refer to $\prod_i^n dx_i$ as $d^n \mathbf{x}$ and that $\sqrt{j} = -1$. First, we define the characteristic function and the inverse Fourier Transform of the joint Gaussian, \ie $f(\mathbf{x})$, as follows:

\begin{equation}
\begin{aligned}
\label{supp:fourier_transform}
\Phi(\mathbf{w}) &= \int_{\mathbb{R}^n}   f(\mathbf{x}) e^{j\prod_{i=1}^n w_i x_i} d^n\mathbf{x}\\
&= e^{\left(\mathbf{w}^\top \mu_x \right)j-\frac{1}{2}\sum_i^n \sum_j^n w_i w_j \Sigma_x(i,j)},
\end{aligned}
\end{equation}

\begin{equation}
\begin{aligned}
\label{supp:inverse_fourier_transform}
f(\mathbf{x}) &= \frac{1}{(2 \pi)^n} \int_{\mathbb{R}^n} \Phi(\mathbf{w})e^{-j \prod_{i=1}^n w_i x_i} d^n\mathbf{w},
\end{aligned}
\end{equation}

\begin{equation}
\begin{aligned}
&\frac{\partial^p f(\mathbf{x})}{\partial^p \mathbf{x}} = \frac{1}{(2 \pi)^n} \int_{\mathbb{R}^n} \Phi(\mathbf{w}) 
 (-j)^p e^{-j \prod_{i=1}^n w_i x_i}\prod_i^p w_i d^n\mathbf{w}.
\end{aligned}
\end{equation}

\noindent Taking the derivatives of the covariances of the consecutive variables, \ie $\rho_{i,i+1}$, we have that
\begin{equation}
\begin{aligned}
\label{supp:derivative_phi}
\frac{\partial^{\frac{p}{2}} \Phi(\mathbf{w})}{\prod_{\forall \text{odd}i} \partial \rho_{i,i+1}} &= \prod_{\forall \text{odd}i}^{\frac{n}{2}}-\frac{1}{2} (2 w_i w_{i+1})\Phi(\mathbf{w}) \\
&= (-1)^{\frac{p}{2}} \Phi(\mathbf{w}) \prod_i^p w_i.
\end{aligned}
\end{equation}

\noindent By substituting Equation \eqref{supp:inverse_fourier_transform} in the expectation of the function $\Psi(\mathbf{x})$ over the joint probability density function $f(\mathbf{x})$, we have:

\begin{equation}
\begin{aligned}
\mathbb{E} [\Psi(\mathbf{x})] &= \int_{\mathbb{R}^n} \Psi(\mathbf{x}) f(\mathbf{x}) d^n\mathbf{x} \\
&= \frac{1}{(2 \pi)^n} \int_{\mathbb{R}^n} \int_{\mathbb{R}^n}  \Psi(\mathbf{x}) e^{-j \prod_{i=1}^n w_ix_i} \Phi(\mathbf{w}) d^n\mathbf{w} d^n\mathbf{x}. \nonumber
\end{aligned}
\end{equation}

Now by applying the theorem and substituting Equation \eqref{supp:derivative_phi}, we have:

\begin{equation}
\begin{aligned}
\frac{\partial^{\frac{p}{2}} \mathbb{E} [\Psi(\mathbf{x})]}{\prod_{\forall \text{odd} i \partial \rho_{ii+1}}} &=  \frac{1}{(2 \pi)^n} \int_{\mathbb{R}^n} \int_{\mathbb{R}^n}
\frac{\partial^{\frac{p}{2}} \Phi(\mathbf{w})}{\prod_{\forall \text{odd} i \partial \rho_{ii+1}}} e^{-j \prod_{i=1}^n w_i x_i} \\
& \qquad
\Psi(\mathbf{x})  d^n\mathbf{w} d^n\mathbf{x} \\
&=  \frac{1}{(2 \pi)^n} \int_{\mathbb{R}^n} \int_{\mathbb{R}^n} (-1)^{\frac{p}{2}} \Phi(\mathbf{w})   \prod_i^p w_i\\
&\qquad \qquad   e^{-j \prod_{i=1}^n w_i x_i}
  \Psi(\mathbf{x}) d^n \mathbf{w} d^n\mathbf{x}\\
&\stackrel{(a)}{=}     \int_{\mathbb{R}^n}  \Psi(\mathbf{x}) \frac{\partial^p f(\mathbf{x})}{\partial^p \mathbf{x}}  d^n\mathbf{x} \\
&=      \int_{\mathbb{R}^n}   \frac{\partial^p \Psi(\mathbf{x})}
{\partial^p\mathbf{x}}  f(\mathbf{x}) d^n\mathbf{x} = \mathbb{E} \left[\frac{\partial^p \Psi(\mathbf{x})}
{\partial^p \mathbf{x}} \right]. \nonumber
\end{aligned}
\end{equation}

\noindent Equality (a) holds since $(-1)^{\frac{p}{2}} = (-j)^p $ holds for even $p$. As for the last equality, it follows by integrating by parts since the Gaussian PDF $f(\mathbf{x})$ is in Schwarz class.
\end{proof}

\section*{Proof of Lemma \ref{lemma3}}
\begin{lemma_supp}
For any bivariate Gaussian random variable $\mathbf{x}=[x_1,x_2]^{\top}$ $\sim$ $\mathcal{N}(\mathbf{0}_2,\Sigma_x)$, the following holds for $T(x_1,x_2) = \max(x_1,0)\max(x_2,0)$:
\begin{equation}
\begin{aligned}
&\mathbb{E}[T(x_1,x_2)] = \\
&\frac{1}{2\pi} \left(\sigma_{12} \sin^{-1}\left(\frac{\sigma_{12}}{\sigma_1 \sigma_2} \right) +  \sigma_1 \sigma_2 \sqrt{1 - \frac{\sigma_{12}^2}{\sigma_1^2 \sigma_2^2}} \right) + \frac{\sigma_{12}}{4}
\notag
\end{aligned}
\end{equation}
where $\sigma_{ij} = \Sigma_x(i,j) ~\forall i\neq j$ and $\sigma^2_i = \Sigma_x(i,i)$.
\end{lemma_supp}

\begin{proof}
Using Lemma (\ref{price_exten}) with $p = 4$ and choosing $\sigma_{12}$ to be the covariances at which  differentiation happens, we have:
\begin{equation}
\label{pde1}
\begin{aligned}
&\frac{\partial^{2} \mathbb{E}[T(x_1,x_2)]}{\partial \sigma_{12} \partial \sigma_{12}} = \mathbb{E} \left[\frac{\partial^4 \mathbb{E}[T(x_1,x_2)]}{\partial x_1 \partial x_2 \partial x_1 \partial x_2} \right] \\
& = \mathbb{E} \left[\frac{\partial^4 \mathbb{E}[T(x_1,x_2)]}{\partial x_1^2 \partial x_2^2} \right]  = \frac{1}{2 \pi \sigma_1 \sigma_2 \sqrt{1 - \frac{\sigma_{12}^2}{\sigma_1 \sigma_2}}}
\end{aligned}
\end{equation}
To solve the partial differential equation in Equation \eqref{pde1}, two boundary conditions are needed. Similar to \cite{price1958useful}, they can be computed when $\sigma_{12} = 0$, which occurs when $x_1$ and $x_2$ are independent random variables. It is easy to show from Remark \ref{theo1} that $\left. \mathbb{E} [T(x_1,x_2)] \right\vert_{\sigma_{12} = 0} =  \frac{\sigma_1 \sigma_2}{2 \pi}$ and that,
\begin{equation}
    \begin{aligned}
    \left.{\frac{\partial \mathbb{E}[T(x_1,x_2)]}{\partial \sigma_{12}}}\right\vert_{\sigma_{12} = 0} &\stackrel{\text{lemma (\ref{price_exten})}}{=} \mathbb{E}[u(x_1)u(x_2)]  \\
    &\stackrel{{\sigma_{12} = 0}}{=} \mathbb{E}[u(x_1)] \mathbb{E}[u(x_2)] = \frac{1}{4}
    \notag
    \end{aligned}
\end{equation}

\noindent With these boundary conditions, we compute the integral  to complete the proof. 
\begin{equation}
    \begin{aligned}
    \mathbb{E}[T(x_1,x_2)] &= \int_{0}^{\sigma_{12}} \left[\frac{1}{4} + \int_0^y \frac{dc}{2\pi \sqrt{1-\frac{c^2}{\sigma_1^2 \sigma_2^2}}} \right] dy  + \frac{\sigma_1 \sigma_2}{2\pi} \notag
    \end{aligned}
\end{equation}
\end{proof}

\section*{Proof of Theorem \ref{theo2}}
\begin{proof}
The proof follows naturally after considering the much simpler scalar function that is in the form $\tilde{g}(\mathbf{z}) = \sum_t^d \alpha_t \text{max}(z_t,0)$ where $\mathbf{z}\in \mathbb{R}^d$ $\sim \mathcal{N} \left(\mathbf{0}_d,\Sigma_z \right)$. Therefore, we have $\mathbb{E}[\tilde{g}^2(\mathbf{z})] = \sum_{r}^d \alpha_{r}^2 \mathbb{E}[\text{max}^2\left(z_{r},0\right)] + 2\sum_{v_2 \leq v_1} \alpha_{v_1} \alpha_{v_2}\mathbb{E}[\text{max}\left(z_{v_1},0\right) \text{max}\left(z_{v_2},0\right)]$. Note that $\tilde{g}(\mathbf{z})$ is only a special case of $\mathbf{g}_i(\mathbf{x})$, where $\mathbf{z} = \mathbf{A} \mathbf{x}$ and $\alpha_{t} = \mathbf{B}(i,t)$. It is also clear from $\mathbb{E}[\tilde{g}\left(\mathbf{z}\right)]$ that it is sufficient to analyze $\mathbb{E}[\tilde{g}\left(\mathbf{z}\right)]$ in the bivariate case. Thus, the function we are interested in is  $\mathbb{E}[\tilde{g}^2\left(z_1,z_2 \right)] = \alpha_1^2 \mathbb{E}[\text{max}^2\left(z_1,0\right)] +  \alpha_2^2 \mathbb{E}[\text{max}^2\left(z_2,0\right)] + 2 \alpha_1 \alpha_2 \mathbb{E}[\text{max}\left(z_1,0\right) \text{max}\left(z_2,0\right)]$. Using  Lemmas \ref{lemma1} and \ref{lemma3}, the proof is complete.
\end{proof}

\section*{Recovering Lemma \ref{lemma3} as a Special Case}
\begin{proposition_supp}
Equation \eqref{eq:big_result} for $\mu_1 = \mu_2 = 0$ recovers the result of Lemma \ref{lemma3}.
\end{proposition_supp}
\begin{proof}
To see this, we first start by observing that $\Omega(0,0,\sigma_1,\sigma_2,\rho) = \frac{\sigma_1\sigma_2\sqrt{1-\rho^2}}{2\pi} + \frac{\rho \sigma_1 \sigma_2}{4}$. Note that since $I_{a,b}(x) = \frac{\sqrt{\pi}}{2} \int_{0}^{x} \exp(-t^2)\text{erf}(at+b)dt$, then $I_{a,b}(0) = 0$. Therefore, we have the following:
\begin{equation}
\begin{aligned}
\label{simplified}
&\mathbb{E}[\max({x_1},{0}) \max({x_2},{0})] =  \frac{\sigma_1\sigma_2 \sqrt{1-\rho^2}}{2 \pi} + \frac{\rho \sigma_1 \sigma_2}{4}  \\
& 
\quad \quad  +
\begin{cases}
\frac{\rho\sigma_1\sigma_2}{\pi} I_{\nicefrac{\rho}{\sqrt{1-\rho^2}},0}\left(\infty\right)   ~~~~~~~~~~~~~~~~~~~~~~~~~~~ |\rho|<\frac{1}{\sqrt{2}},\\

\frac{\rho\sigma_1\sigma_2}{\pi} \text{sign}(\rho) \left( \frac{\pi}{4}  -  
I_{\nicefrac{\sqrt{1-\rho^2}{|\rho|}},0}\left(\infty\right) \right) ~~  |\rho|>\frac{1}{\sqrt{2}}.
\end{cases}
\end{aligned}
\end{equation}
Moreover, since for the incomplete Gamma function we have $P(x,\infty) = 1$, then:
\begin{equation}
\begin{aligned}
\label{integral_simplify}
I_{a,0}(\infty) =  \frac{\sqrt{\pi}}{2} \sum_{u=0}^\infty \frac{\left(\nicefrac{a}{2}\right)^{2u+1}}{\Gamma(u + \nicefrac{3}{2})}H_{2u}(0).
\end{aligned}
\end{equation}
This is since the odd terms in the Hermite polynomial vanish, \ie $H_{2u+1}(0) = 0$. As for the even terms, \ie $H_{2u}(x)$, they are given as $H_{2u}(x) = (2u)! \sum_{l=0}^{u}\frac{(-1)^{u-l}}{(2l)! (u - l)!}(2x)^{2l}$. Thus, we have that $H_{2u}(0) = (2u)! \frac{(-1)^u}{u!}$. Note that since, $\Gamma(u + \nicefrac{3}{2}) = (u+\nicefrac{1}{2})\frac{(2u)!}{4^u u!}\sqrt{\pi}$, substituting in \eqref{integral_simplify}, we have the following:
\begin{equation}
\begin{aligned}
I_{a,0}(\infty) =  \frac{1}{2} \sum_{u=0}^\infty \frac{(-1)^u\left(a\right)^{2u+1}}{2u + 1} = \frac{1}{2}\tan^{-1}(a).
\end{aligned}
\end{equation}
Using the identity $\sin^{-1}(x) = \tan^{-1}\left(\nicefrac{x}{{\sqrt{1-x^2}}}\right)$, Equation \eqref{simplified} can be reduced to:
\begin{equation}
\begin{aligned}
&\mathbb{E}[\max({x_1},{0}) \max({x_2},{0})] =  \\
& 
\begin{cases}
\frac{\sigma_1 \sigma_2}{2\pi}\left(\rho \sin^{-1}(\rho) + \sqrt{1-\rho^2}\right)+ \frac{\rho \sigma_1 \sigma_2}{4} 
\quad ~~~ |\rho|<\frac{1}{\sqrt{2}},\\
\frac{\sigma_1 \sigma_2}{2\pi}\left(\rho \text{sign}(\rho) \left[\frac{\pi}{2} - \tan^{-1}\left(\nicefrac{\sqrt{1-\rho^2}}{|\rho|}\right)\right] \right. \\
\left. \qquad \qquad \qquad + \sqrt{1-\rho^2}\right)+ \frac{\rho \sigma_1 \sigma_2}{4} 
\qquad \quad ~~  |\rho|>\frac{1}{\sqrt{2}}.
\end{cases}
\end{aligned}
\end{equation}
Note that for the case $\rho > \nicefrac{1}{\sqrt{2}}$, one can observe that $\nicefrac{\pi}{2} - \tan^{-1}(\frac{\sqrt{1-\rho^2}}{\rho}) = \nicefrac{\pi}{2} - \cos^{-1}(\rho) =  \sin^{-1}(\rho)$. Lastly, when $\rho < -\nicefrac{1}{\sqrt{2}}$, we have that $\nicefrac{\pi}{2} - \tan^{-1}\left(\frac{\sqrt{1-\rho^2}}{-\rho}\right) = \nicefrac{\pi}{2} - \cos^{-1}(-\rho) = - \sin^{-1}(\rho)$ where $\text{sign}(\rho) = -1$. This completes the proof as $\sigma_{12} = \rho \sigma_1 \sigma_2$ in Lemma \ref{lemma3}.
\end{proof}

\ifCLASSOPTIONcompsoc
  \section*{Acknowledgments}
\else
  \section*{Acknowledgment}
\fi
This work was supported by King Abdullah University of Science and Technology (KAUST) Office of Sponsored Research.

\ifCLASSOPTIONcaptionsoff
  \newpage
\fi


\bibliographystyle{IEEEtran}
\bibliography{references.bib}

\vspace{-1.20cm}
\begin{IEEEbiography}[{\includegraphics[width=1in,height=1.25in,clip,keepaspectratio]{./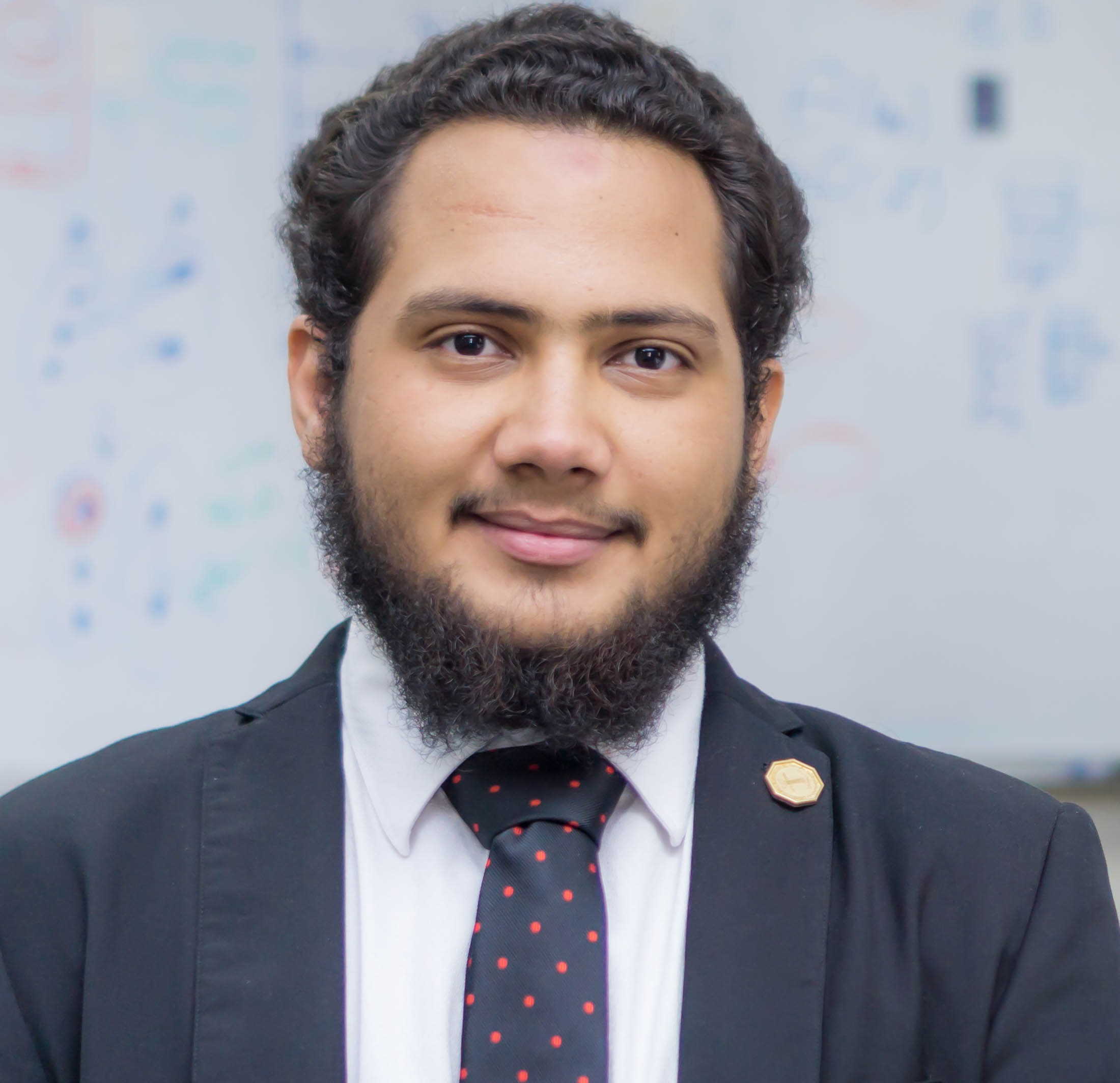}}]{Modar Alfadly} is pursuing his PhD degree in computer science at King Abdullah University of Science and Technology (KAUST) where he also obtained his Masters degree in 2018. In 2016, he received his BSc degree with honors distinction in software engineering from King Fahd University of Petroleum and Minerals (KFUPM). Currently, his main focus is understanding deep neural networks for computer vision. His recent research interests include network robustness, adversarial attacks, and uncertainty prediction.
\end{IEEEbiography}

\vspace{-1.20cm}
\begin{IEEEbiography}[{\includegraphics[width=1in,height=1.25in,clip,keepaspectratio]{./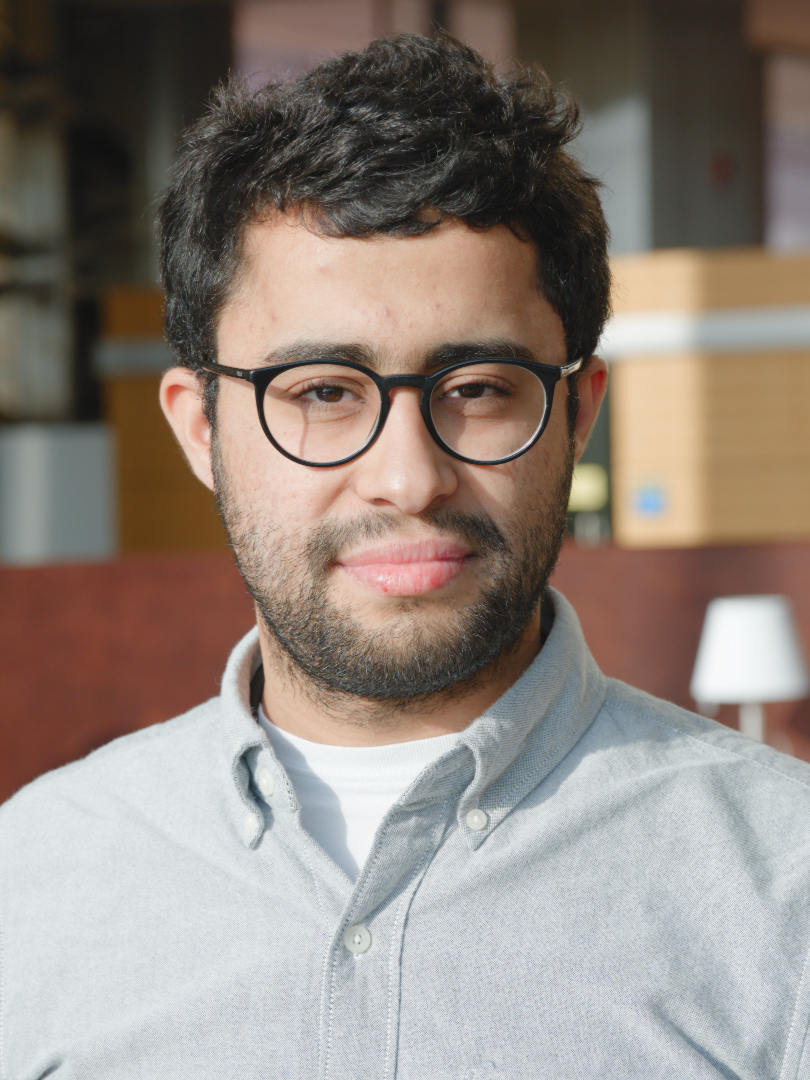}}]{Adel Bibi} received his BSc degree in electrical engineering with class honors from Kuwait university in 2014. He later obtained his MSc degree in electrical engineering with a focus on computer vision from King Abdullah University of Science and Technology (KAUST) in 2016. Currently, he is pursuing his PhD degree with a focus at the intersection between computer vision, machine learning and optimization in KAUST. Adel has been recognized as an outstanding reviewer for CVPR18, CVPR19 and ICCV19 and won the best paper award at the optimization and big data conference in KAUST. He has published more than 10 papers in CVPRs, ECCVs, ICCVs and ICLRs some which were selected for oral and spotlight presentations.
\end{IEEEbiography}

\vspace{-1.20cm}
\begin{IEEEbiography}[{\includegraphics[width=1in,height=1.25in,clip,keepaspectratio]{./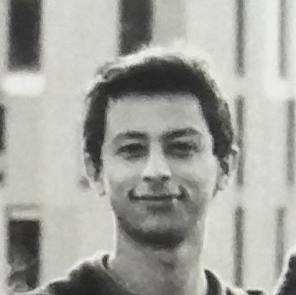}}]{Emilio Botero} obtained his BSc degree in biomedical engineering in 2017 from the Universidad de los Andes in Bogotá, Colombia. After graduating, he did an internship at KAUST, where he worked on understanding neural networks. He is currently finishing his MSc in Computer Science from Université de Montréal/Mila (Quebec Artificial Intelligence Institute), with emphasis in machine learning, representation learning and probabilistic graphical models.
\end{IEEEbiography}

\vspace{-1.20cm}
\begin{IEEEbiography}[{\includegraphics[width=1in,height=1.25in,clip,keepaspectratio]{./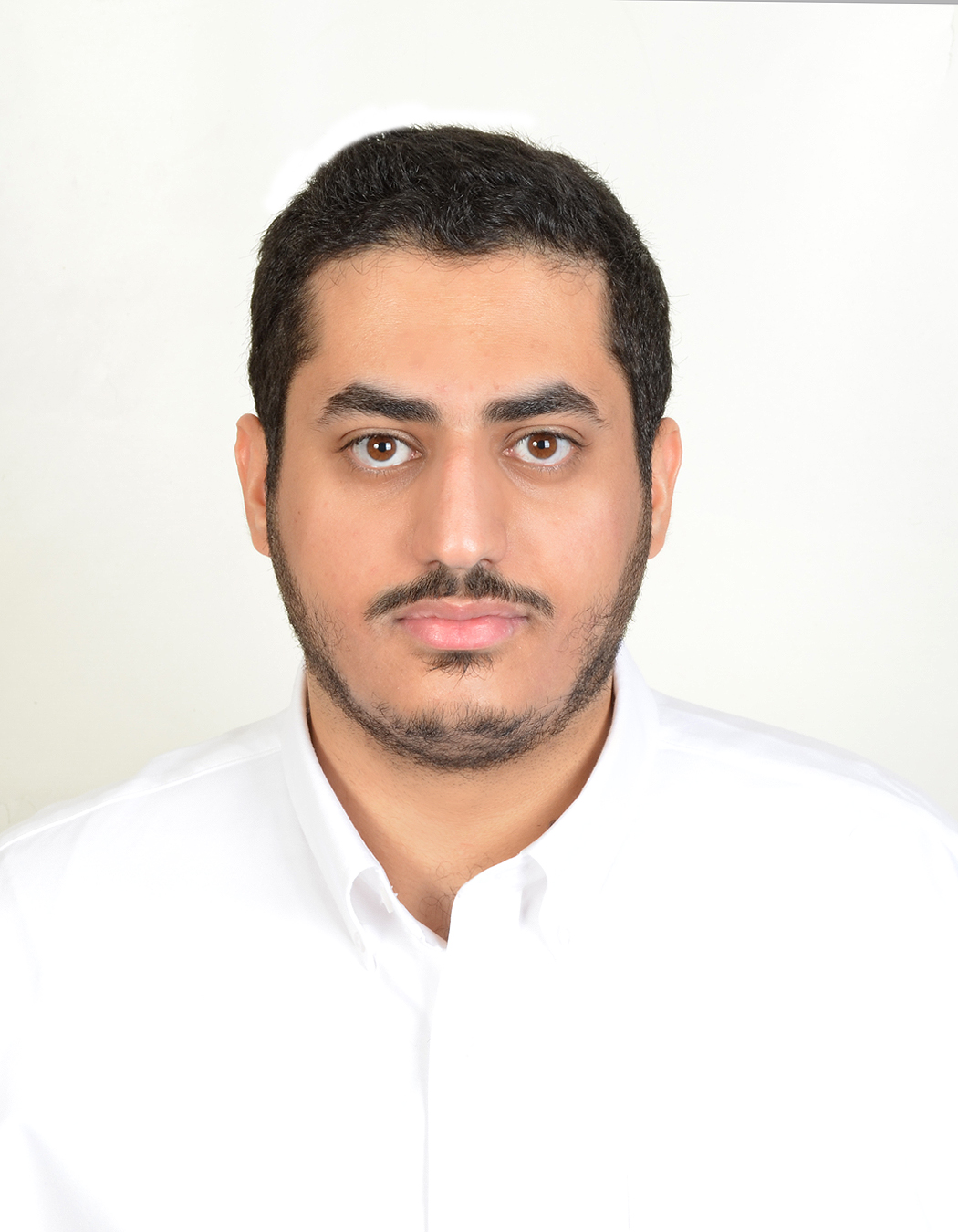}}]{Salman Alsubaihi} received his BSc degree in Electrical Engineering from King Fahd University of Petroleum and Minerals (KFUPM) in 2017. He then obtained his MSc degree in Electrical Engineering, with focus on computer vision, from King Abdullah University of Science and Technology (KAUST) in 2019. Salman's research interests are toward the study and analysis of neural networks.
\end{IEEEbiography}

\vspace{-1.20cm}
\begin{IEEEbiography}[{\includegraphics[width=1in,height=1.25in,clip,keepaspectratio]{./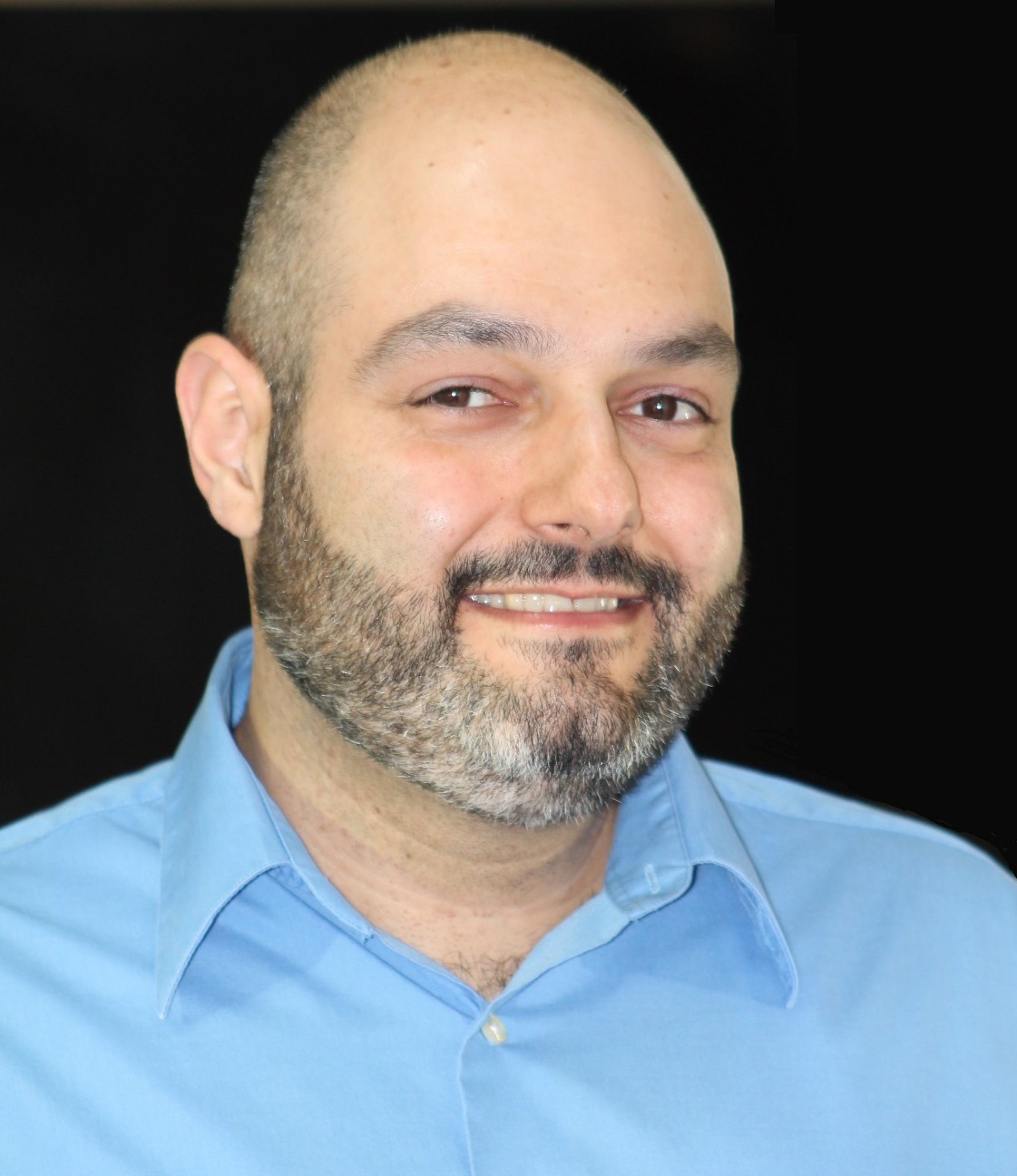}}]{Bernard Ghanem} is currently an associate professor with the King Abdullah University of Science and Technology (KAUST), in the Visual Computing Center (VCC). He leads the Image and Video Understanding Lab (IVUL), KAUST. He is involved in several interesting projects that focus on exploiting techniques in computer vision and machine learning for real-world applications including semantic sports video analysis, large-scale activity recognition/detection, and real-time crowd analysis. He has published more than 100 peer-papers in peer-reviewed venues including the IEEE Transactions on Pattern Analysis and Machine Intelligence, the International Journal of Computer Vision, CVPR, ICCV, ECCV, etc. He is a member of the IEEE.
\end{IEEEbiography}


\end{document}